\newcommand{\argmin}{\mathop{\mathrm{argmin}}}
\newtheorem{theorem}{Theorem}
\newtheorem{lemma}{Lemma}
\newtheorem{proposition}{Proposition}
\theoremstyle{definition}
\begin{document}

\title{Coherence-based Approximate Derivatives via \\ Web of Affine Spaces Optimization}

\author{\authorblockN{Daniel Rakita,
Chen Liang\authorrefmark{1},
Qian Wang\authorrefmark{1}}
\authorblockA{\small \authorrefmark{1}\textit{Equal Contribution}}
\authorblockA{Department of Computer Science, 
Yale University}
\authorblockA{\{daniel.rakita, dylan.liang, peter.wang.qw262\}@yale.edu}
}



%

\maketitle

\begin{abstract}
Computing derivatives is a crucial subroutine in computer science and related fields as it provides a local characterization of a function's steepest directions of ascent or descent.  In this work, we recognize that derivatives are often not computed in isolation; conversely, it is quite common to compute a \textit{sequence} of derivatives, each one somewhat related to the last.  Thus, we propose accelerating derivative computation by reusing information from previous, related calculations—a general strategy known as \textit{coherence}.  We introduce the first instantiation of this strategy through a novel approach called the Web of Affine Spaces (WASP) Optimization.  This approach provides an accurate approximation of a function's derivative object (i.e. gradient, Jacobian matrix, etc.) at the current input within a sequence.  Each derivative within the sequence only requires a small number of forward passes through the function (typically two), regardless of the number of function inputs and outputs.  We demonstrate the efficacy of our approach through several numerical experiments, comparing it with alternative derivative computation methods on benchmark functions.  We show that our method significantly improves the performance of derivative computation on small to medium-sized functions, i.e., functions with approximately fewer than 500 combined inputs and outputs.  Furthermore, we show that this method can be effectively applied in a robotics optimization context. We conclude with a discussion of the limitations and implications of our work.  Open-source code, visual explanations, and videos are located at the paper website: \href{https://apollo-lab-yale.github.io/25-RSS-WASP-website/}{https://apollo-lab-yale.github.io/25-RSS-WASP-website/}. 
\end{abstract}

\section{Introduction}
\label{sec:introduction}

Mathematical derivatives are fundamental to much of science. At a high level, derivatives offer a local characterization of a function's steepest ascent or descent directions. In practice, this property is frequently employed in numerical optimization, where derivatives guide the iterative process of navigating downhill through the landscape of a function \citep{nocedal1999numerical}.  For example, derivative-based optimization is widely used in robotics for tasks such as inverse kinematics, trajectory optimization, physics simulation, control, learning, and constrained planning.

Since derivative computation often takes place within a tight, low-level loop in the application stack, the speed of this process is critical to maintaining sufficient performance.  For example, consider a legged robot using a derivative-based model predictive control (MPC) algorithm to maintain balance \cite{di2018dynamic, kuindersma2016optimization}.  If the robot is nudged, it must compute derivatives very rapidly to guide the optimization process and allow the real-time reactive actuations of its legs to stay upright.

As we will discuss in \S\ref{sec:background}, there are several standard techniques to calculate the derivatives of a function \cite{griewank2008evaluating}.  These techniques generally involve repeatedly evaluating the function with slightly modified arguments, observing the resulting perturbations in the function's input or output space, and constructing the derivative from these observations.  However, since the number of function evaluations required typically scales with the number of inputs or outputs of the function, these approaches can quickly become prohibitively expensive when either, or especially both, of these dimensions increase.  

\begin{figure}
\centering
\includegraphics[width=\columnwidth]{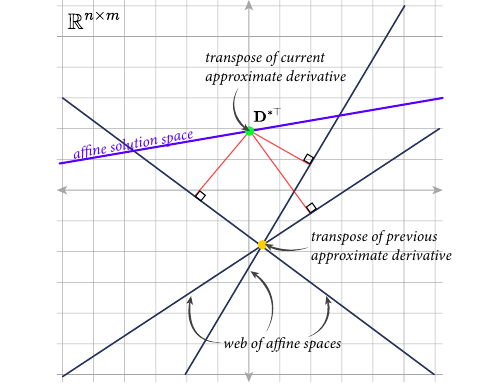}
\caption{ In this work, we present an approach for efficiently computing a sequence of approximate derivatives by reusing information from recent calculations.  Our approach first isolates an affine solution space where the true derivative must lie (purple line).  Next, a closed-form optimization procedure locates the point in this space that is the closest orthogonal distance (red lines) to a ``web'' of affine spaces (dark blue lines) that intersects at the previous approximate derivative (orange dot).  This optimal point will be the transpose of the approximate derivative matrix, $\mathbf{D}^{* \top}$ (green dot). }
\label{fig:teaser}
\vspace{-0.3cm}
\end{figure}

In this work, we recognize that derivatives are often not computed in isolation; conversely, it is quite common to calculate a \textit{sequence} of derivatives, each one building on the last.  For example, in optimization, function inputs typically change only slightly between iterations as small steps are taken downhill, with each input incrementally leading to the next.  The key insight of this work is that derivative computation can be accelerated by reusing information from previous, related calculations—a strategy known as \textit{coherence}.  

We present a first instantiation of this coherence-based strategy for derivative computation through a novel approach called the Web of Affine Spaces (WASP) Optimization. At its core, this approach frames derivative computation as a constrained least-squares minimization problem \cite{bjorck2024numerical}. Each iteration of the algorithm requires only one Jacobian-vector product (JVP) which creates an affine space within which the true derivative is guaranteed to lie. The optimization is then tasked with finding the transpose of an approximate derivative that lies on this affine space (specified by a hard constraint) while best aligning with previous, related computations (specified in the objective function).  This process is illustrated in Figure \ref{fig:teaser}.  

We provide a closed-form solution to this minimization problem by directly solving its corresponding Karush-Kuhn-Tucker (KKT) system \cite{fletcher2000practical, nocedal1999numerical}.  Our algorithm that uses this minimization also incorporates an error detection and correction mechanism that automatically identifies when its outputs drift too far from the ground-truth derivatives, allocating additional iterations to realign its results as needed.  This mechanism is guided by two user-adjustable parameters, affording a flexible balance between accuracy and computational performance tailored to specific applications.


The algorithm associated with our approach (\S\ref{sec:algorithmic_details}) is straightforward to implement and can be easily interfaced with existing code.  The algorithm does not require tape-based variable tracking or an external automatic differentiation library \cite{baydin2018automatic}; it simply uses standard forward passes through a function.  All ideas presented in this work can be implemented in less than 100 lines of code in any programming language that supports numerical computing.  


We demonstrate the effectiveness of our approach through a series of numerical experiments, benchmarking it against alternative derivative computation methods.  We show that our approach improves the performance of computing a sequence of derivatives on small to medium-sized functions, i.e., functions with approximately fewer than 500 combined inputs and outputs.  Additionally, we demonstrate its practical applicability in a robotics optimization context, showcasing its use in a Jacobian-pseudoinverse-based root-finding procedure to determine the pose of a quadruped robot with specified foot and end-effector placements.  We conclude with a discussion of the limitations and implications of our work.
\section{Background}
\label{sec:background}

In this section, we provide background for our approach, including notation, problem setup, standard approaches for derivative computation, and relevant prior work.

\subsection{Notation}
\label{sec:notation}

The main mathematical building blocks through this work are matrices and vectors.  Matrices will be denoted with bold upper case letters, e.g., $\mathbf{A}$, and vectors will be denoted with bold lower case letters, e.g., $\mathbf{x}$.  Indexing into matrices or vectors will use sub-brackets, e.g., $\mathbf{A}_{[0,1]}$ or $\mathbf{x}_{[2]}$.  A full row or column of a matrix can be referenced using a colon, e.g., $\mathbf{A}_{[:,0]}$ is the first column and $\mathbf{A}_{[0,:]}$ is the first row.  

\subsection{Problem Setup}

In this work, we will refer to some function under consideration as $f$, which has $n$ inputs and $m$ outputs, i.e., $f: \mathbb{R}^n \rightarrow \mathbb{R}^m$.  We will also assume that the given function $f$ is computable and differentiable.

The mathematical object we are trying to compute is the derivative object of $f$ at a given input $\mathbf{x}_k \in \mathbb{R}^n$, denoted as $\frac{\partial f}{\partial \mathbf{x}}\big|_{\mathbf{x}_k}$.  This derivative will be an $m \times n$ matrix, i.e., $\frac{\partial f}{\partial \mathbf{x}}\big|_{\mathbf{x}_k} \in \mathbb{R}^{m \times n}$, with the following structure:

\begin{equation}
\frac{\partial f}{\partial \mathbf{x}}\bigg|_{\mathbf{x}_k} = 
\begin{bmatrix} 
\frac{\partial f_{[1]}}{\partial \mathbf{x}_{k_{[1]}}} & \frac{\partial f_{[1]}}{\partial \mathbf{x}_{k_{[2]}}} & ... & \frac{\partial f_{[1]}}{\partial \mathbf{x}_{k_{[n]}}}  \\ 
\frac{\partial f_{[2]}}{\partial \mathbf{x}_{k_{[1]}}} & \frac{\partial f_{[2]}}{\partial \mathbf{x}_{k_{[2]}}} & ... & \frac{\partial f_{[2]}}{\partial \mathbf{x}_{k_{[n]}}}  \\ 
\vdots & \vdots & ... & \vdots \\ 
\frac{\partial f_{[m]}}{\partial \mathbf{x}_{k_{[1]}}} & \frac{\partial f_{[m]}}{\partial \mathbf{x}_{k_{[2]}}} & ... & \frac{\partial f_{[m]}}{\partial \mathbf{x}_{k_{[n]}}}  
\end{bmatrix}    
\end{equation}

This matrix is referred to as a \textit{Jacobian}, or specifically as a \textit{gradient} when $m = 1$ \cite{stewart2012calculus}. Throughout this work, however, we will consistently use the broader term, derivative.

\subsection{Problem Statement}

In this work, we are specifically looking to compute a \textit{sequence of approximate derivative matrices}:

$$..., \ \hat{\frac{\partial f}{\partial \mathbf{x}}}\bigg|_{\mathbf{x}_{k-1}}, \  \hat{\frac{\partial f}{\partial \mathbf{x}}}\bigg|_{\mathbf{x}_{k}}, \ \hat{\frac{\partial f}{\partial \mathbf{x}}}\bigg|_{\mathbf{x}_{k+1}}, \ ... $$

Our goal is to compute these approximate derivatives as quickly and as accurately as possible.  We assume that the derivative computation at an input $\mathbf{x}_k$ can utilize knowledge of all prior inputs and calculations (i.e., information available up to and including $k$), but it has no access to information about future inputs (i.e., data beyond $k$).  

There is an implicit assumption in our problem that adjacent inputs, e.g., $\mathbf{x}_k$ and $\mathbf{x}_{k+1}$, are relatively ``close'', as is often the case in iterative optimization.  However, our approach does not impose any specific requirement for the closeness of neighboring inputs.  Instead, it is informally assumed that the approach will perform more effectively when the inputs are closer to each other with efficiency or accuracy likely diminishing as the distance between inputs increases.


\subsection{Standard Derivative Computation Algorithms}

A common strategy for computing derivatives involves introducing small perturbations in the input or output space surrounding the derivative and incrementally constructing the derivative matrix by analyzing the local behavior exhibited by the derivative in response to these perturbations \citep{griewank2008evaluating}.    



Specifically, perturbing the derivative in the input space looks like the following:

\begin{equation}
\underbrace{\frac{\partial f}{\partial \mathbf{x}}\bigg|_{\mathbf{x}_k}}_{\text{derivative } \in \mathbb{R}^{m \times n}} \overbrace{\Delta \mathbf{x}}^{\text{tangent } \in \mathbb{R}^n} = \underbrace{\Delta \mathbf{f}}_{\text{Jacobian-vector product } \in \mathbb{R}^m}
\end{equation}

The $\Delta \mathbf{x}$ object here is commonly called a \textit{tangent}, and the resulting $\Delta \mathbf{f}$ is known as the \textit{Jacobian-vector product} (JVP) or \textit{directional derivative} \cite{baydin2018automatic, griewank2008evaluating}.  Conversely, perturbing the derivative in the output space looks like the following:

\begin{equation}
\underbrace{\Delta \mathbf{f}^\top}_{\text{adjoint } \in \mathbb{R}^{1 \times m}} \overbrace{\frac{\partial f}{\partial \mathbf{x}}\bigg|_{\mathbf{x}_k}}^{\text{derivative } \in \mathbb{R}^{m \times n}} = \underbrace{\Delta \mathbf{x}^\top}_{\text{vector-Jacobian Product } \in \mathbb{R}^{1 \times n}}
\end{equation}

The $\Delta \mathbf{f}^\top$ object here is commonly called an \textit{adjoint}, and the result $\Delta \mathbf{x}^\top$ is known as the \textit{vector-Jacobian product} (VJP) \cite{baydin2018automatic, griewank2008evaluating}.      

In general, there are two standard ways of computing JVPs: (1) forward-mode automatic differentiation; and (2) finite-differencing.  Forward-mode automatic differentiation propagates tangent information alongside standard numerical computations.  This technique commonly involves overloading floating-point operations to include additional tangent data \cite{margossian2019review}.  A JVP via first-order finite-differencing derives from the standard limit-based definition of a derivative:

\begin{equation}
    \label{eq:finite_diff}
    \frac{\partial f}{\partial \mathbf{x}}\bigg|_{\mathbf{x}_k} \ \Delta \mathbf{x} \approx \frac{f(\mathbf{x}_k + \epsilon\Delta \mathbf{x}) - f(\mathbf{x}_k)}{\epsilon}.
\end{equation}

If $\epsilon$ is small, this approximation is close to the true JVP.  Note that this JVP requires two forward passes through $f$, and additional JVPs at the same input $\mathbf{x}_k$ would only require one additional forward pass through $f$ each.     

Conversely, there is generally only one way of computing a VJP: reverse-mode automatic differentiation, often called \textit{backpropagation} in a machine-learning context \cite{baydin2018automatic, paszke2017automatic}.  This process involves building a computation graph (or Wengert List \cite{wengert1964simple}) on a forward pass, then doing a reverse pass over this computation graph to backward propagate adjoint information.  In general, VJP-based differentiation is more challenging to implement and manage compared to its JVP-based counterpart. This approach typically requires an external library to track variables and operations, enabling the construction of a computation graph \cite{jax2018github, paszke2019pytorch}. As a result, all downstream computations within a function must adhere to the same code structure or use the same library.     

Note that the concepts of JVPs and VJPs now give a clear strategy for isolating the whole derivative matrix.  For instance, using ``one-hot'' vectors for tangents or adjoints, i.e., vectors where only the $i$-th element is $1$ and all others are $0$, can effectively capture the $i$-th column or $i$-th row of the derivative matrix, respectively \cite{griewank2008evaluating}.  Thus, the derivative matrix can be fully recovered using $n$ JVPs or $m$ VJPs.  

In practice, a JVP-based approach is typically used if $n < m$ and a VJP-based approach is typically used if $m < n$.  However, as either $m$ or $n$ increase -- or, especially if both increase -- these approaches can quickly become inefficient.      

\subsection{Other Related Works}

Our work builds on previous methods aimed at accelerating derivative computations through approximations. For first-order derivatives, our approach is closest to Simultaneous Perturbation Stochastic Approximation (SPSA) \citep{spall1992multivariate}. SPSA, primarily used in optimization, estimates gradients using only two function evaluations by perturbing the function along one sampled direction.  This idea has inspired more recent work that approximates a gradient via a bundle of stochastic samples at a point \citep{baydin2022gradients, suh2022bundled}. 

Our approach also approximates derivatives by perturbing inputs in random directions and observing the output changes. Like SPSA (and related methods), it aims to achieve an approximate derivative with a small number forward passes through the function. However, we treat the random directions as a matrix in a least squares optimization, aligning the result with prior observations.  Also, unlike previous methods that primarily focus on gradients, our approach handles derivative objects of any shape, including full Jacobian matrices.

Interestingly, while coherence-based strategies have not been widely explored for first-order derivatives, they are frequently used in second-order derivative computations of scalar functions within optimization algorithms. For example, quasi-Newton methods like Davidon–Fletcher–Powell (DFP) \citep{fletcher1963rapidly}, Symmetric Rank 1 (SR1) \citep{davidon1991variable}, and Broyden–Fletcher–Goldfarb–Shanno (BFGS) \citep{fletcher1970new, broyden1970convergence, shanno1970conditioning} use the secant equation to iteratively build approximations of the Hessian matrix over a sequence of related inputs.  However, these algorithms cannot compute gradients directly, as they rely on them as inputs.  Through this lens, our current approach can be viewed as quite related to quasi-Newton methods for Hessian approximation, but specifically for first-order derivatives.



\section{Technical Overview}
\label{sec:technical_overview}

In this section, we overview the central concepts and intuitions of our idea.

\subsection{Differentiation as Linear System}

As covered in \S\ref{sec:background}, each pass through the function in JVP-based differentiation generates one JVP: $\frac{\partial f}{\partial \mathbf{x}}\big|_{\mathbf{x}_k} \Delta \mathbf{x} = \Delta \mathbf{f}$.  We can bundle several tangents together in a matrix in order to get a matrix of JVPs:

\begin{equation}
\begin{gathered}
\frac{\partial f}{\partial \mathbf{x}} \bigg|_{\mathbf{x}_k} 
\underbrace{\begin{bmatrix}\begin{array}{c|c|c|c} 
\Delta\mathbf{x}_1 & \Delta\mathbf{x}_2 & \dots & \Delta\mathbf{x}_r
\end{array}\end{bmatrix}}_{\Delta\mathbf{X}} = \\
\underbrace{\begin{bmatrix} \begin{array}{c|c|c|c} 
\Delta\mathbf{f}_1 & \Delta\mathbf{f}_2 & \dots & \Delta\mathbf{f}_r 
\end{array}\end{bmatrix}}_{\Delta\mathbf{F}}
\end{gathered}
\end{equation}

\begin{equation}
\label{eq:bundle}
\underbrace{\frac{\partial f}{\partial \mathbf{x}} \bigg|_{\mathbf{x}_k} }_{m \times n} \ \underbrace{\Delta\mathbf{X}}_{n \times r} = \underbrace{\Delta\mathbf{F}}_{m \times r}.
\end{equation}

Note that the $i$-th tangent vector and JVP are denoted as $\Delta \mathbf{x}_i$ and $\Delta \mathbf{f}_i$, respectively.  We use this same notation throughout the paper.  From Equation \ref{eq:bundle}, we see that JVP-based differentiation can also be interpreted as setting $\Delta\mathbf{X}$ to be the identity matrix and ``solving'' a linear system:

\begin{equation}
\frac{\partial f}{\partial \mathbf{x}}\bigg|_{\mathbf{x}_k} \ \mathbf{I} = \Delta\mathbf{F}  \ \Longrightarrow \ \frac{\partial f}{\partial \mathbf{x}}\bigg|_{\mathbf{x}_k} = \Delta\mathbf{F}
\end{equation}

This concept is mathematically straight forward, but it is important to note that generating $\Delta \mathbf{F}$ can be computationally expensive as it requires $n$ forward passes through $f$.  Our idea builds on this linear system idea, with $\Delta \mathbf{X}$ no longer only being an identity matrix.

\subsection{Differentiation as Least Squares Optimization}

In the section above, we assessed the linear system $\frac{\partial f}{\partial \mathbf{x}}\big|_{\mathbf{x}_k} \Delta \mathbf{X} = \Delta \mathbf{F}$.  If we take the transpose of both sides, we get the following: 

$$\Delta \mathbf{X}^\top \frac{\partial f}{\partial \mathbf{x}}\bigg|_{\mathbf{x}_k}^\top = \Delta \mathbf{F}^\top.$$  

This equation now nicely matches a standard ``$Ax = b$'' linear system, with ``$x$'' being an unknown matrix variable, in our case.  We cast this linear system as a least squares optimization \cite{bjorck2024numerical}:

\begin{equation}
\label{eq:least_squares}
\frac{\partial f}{\partial \mathbf{x}}\bigg|_{\mathbf{x}_k}^\top = \argmin_{\mathbf{D}^\top}||\Delta\mathbf{X}^\top \mathbf{D}^\top - \Delta\mathbf{F}^\top||^2_F.
\end{equation}

\noindent Here, $\mathbf{D}^\top$ is the decision variable acting as the derivative matrix, and $F$ denotes the Frobenius norm over matrices.  This formulation offers a clear analytical framework for considering general solutions, even when $\Delta \mathbf{X}$ is not a square or identity matrix.  The closed-form solution for this optimization problem is the following \cite{golub2013matrix}:

\begin{equation}
    \label{eq:least_squares_solution}
    \mathbf{D}^\top = (\Delta\mathbf{X}^\top)^\dagger \Delta\mathbf{F}^\top \Longrightarrow \mathbf{D} = \Delta \mathbf{F} \ \Delta \mathbf{X}^\dagger
\end{equation}

\noindent where the $\dagger$ symbol denotes the Moore-Penrose Pseudoinverse \cite{strang2022introduction}.  If $\Delta \mathbf{X}$ is full rank, $r \geq n$ (i.e., $\Delta \mathbf{X}^\top$ is square or tall), and $\Delta\mathbf{F}$ is a matrix of JVPs corresponding to the tangents in $\Delta \mathbf{X}$, this solution exactly matches the true derivative matrix.  However, this solution has not yet improved efficiency since we would still have to compute $\Delta\mathbf{F}$, which was the most expensive step from before.   

A key insight in this work is that $\Delta\mathbf{F}$ in the minimization above can be replaced with an approximation, $\hat{\Delta\mathbf{F}}$:

\begin{equation}
\label{eq:least_squares_with_approximation}
\frac{\partial f}{\partial \mathbf{x}}\bigg|_{\mathbf{x}_k}^\top  \approx \argmin_{\mathbf{D}^\top}||\Delta\mathbf{X}^\top \mathbf{D}^\top - \hat{\Delta\mathbf{F}}^\top||^2_F.
\end{equation}

\noindent Rather than fully recomputing $\Delta \mathbf{F}$ for each new input, we \textit{incrementally update} $\hat{\Delta \mathbf{F}}$ \textit{across a sequence of inputs}. Since $\hat{\Delta \mathbf{F}}$ is an approximation, the entire minimization process now becomes an approximation as well. Through the remainder of this work, we argue that, given an additional constraint on Equation \ref{eq:least_squares_with_approximation} and a particular strategy for updating $\hat{\Delta \mathbf{F}}$, this approach is more efficient than standard approaches for computing a sequence of derivative matrices while maintaining accuracy sufficient for practical use.

\section{Technical Details}
\label{sec:technical_details}

In this section, we detail the Web of Affine of Spaces (WASP) Optimization approach for computing a sequence of approximate derivatives matrices.

\subsection{Affine Solution Space}

As discussed above, the bottleneck of Equation \ref{eq:least_squares} is the calculation of $\Delta\mathbf{F}$, the matrix bundle consisting of $r$ JVPs, where $r \geq n$. Rather than relying solely on $r$ JVPs, we first think about how much information about the derivative solution can be inferred from only one JVP. 

Suppose we have one fixed tangent of random values, $\Delta\mathbf{x} \in \mathbb{R}^n$ with a corresponding JVP, $\Delta \mathbf{f} \in \mathbb{R}^m$.  We can plug these vectors into Equation \ref{eq:least_squares}, with $\Delta \mathbf{X}^\top \equiv \Delta \mathbf{x}^\top \in \mathbb{R}^{1 \times n}$ and $\Delta \mathbf{F}^\top \equiv \Delta \mathbf{f}^\top \in \mathbb{R}^{1 \times m}$.  If $n > 1$, we have $\Delta \mathbf{X}^\top$ as a ``wide matrix'' that elicits an under-determined least squares system with \textit{infinitely} many solutions \cite{strang2022introduction}.  The space of all solutions is parameterized as follows:

\begin{equation}
\label{eq:affine_solution_space}
(\Delta \mathbf{x}^\top)^\dagger \Delta \mathbf{f}^\top + \mathbf{Z}_{\Delta \mathbf{x}^\top}\mathbf{Y}
\end{equation}

\noindent where $\mathbf{Z}_{\Delta \mathbf{x}^\top} \in \mathbb{R}^{n \times (n-1)}$ is the null space matrix of $\Delta \mathbf{x}^\top$ (i.e., $\Delta \mathbf{x}^\top \mathbf{Z}_{\Delta \mathbf{x}^\top} = \mathbf{0}$), and $\mathbf{Y}$ is any matrix in $\mathbb{R}^{(n-1) \times m}$.  Equation \ref{eq:affine_solution_space} defines an $(n-1) \times m$-dimensional \textit{affine space} encompassing all possible solutions, where $\mathbf{Z}_{\Delta \mathbf{x}^\top}\mathbf{Y}$ represents the associated vector space, and $(\Delta \mathbf{x}^\top)^\dagger \Delta \mathbf{f}^\top$ (the minimum-norm solution) serves as the offset from the origin.  Even with just one JVP, we have captured a space where the true derivative must lie for some setting of $\mathbf{Y}$.  Our idea, covered below, constrains the solution to lie within the space defined in Equation \ref{eq:affine_solution_space}, while also maintaining alignment with recent computations.

\subsection{Web of Affine Spaces}
\label{sec:web_of_affine_spaces}

In the previous section, we isolated a space where the true derivative must lie given only a single JVP.  We now assess what happens if we consider more JVPs.  

Consider $r$ JVPs, $\Delta \mathbf{f}_i$, with corresponding tangents $\Delta \mathbf{x}_i$, where $i \in {1, ..., r}$. Each JVP defines its own affine solution space, within which the true derivative must reside: $(\Delta \mathbf{x}_i^\top)^\dagger \Delta \mathbf{f}_i^\top + \mathbf{Z}_{\Delta \mathbf{x}_i^\top}\mathbf{Y}$. Knowing that the true derivative must lie within \textit{all} of these affine spaces, we can deduce that it must be located at the \textit{intersection} of these spaces. Indeed, when $r \geq n$, the intersection of all these spaces results in a single, unique matrix: the exact derivative. Essentially, this is precisely what the solution in Equation \ref{eq:least_squares_solution} achieves.  

We now return to the idea presented in Equation \ref{eq:least_squares_with_approximation}.  What if some JVPs are approximate, $\hat{\Delta \mathbf{f}}$, rather than ground-truth, $\Delta \mathbf{f}$? The idea here is that, if we have one ground-truth JVP, $\Delta \mathbf{f}_i$, along with at least $n-1$ other approximate JVPs, $\hat{\Delta \mathbf{f}}_j$, we can still force the solution to lie on $(\Delta \mathbf{x}_i^\top)^\dagger \Delta \mathbf{f}_i^\top + \mathbf{Z}_{\Delta \mathbf{x}_i^\top}\mathbf{Y}$ in a manner that gets as close as possible to the affine spaces corresponding to the other approximate JVPs, $(\Delta \mathbf{x}_j^\top)^\dagger \hat{\Delta \mathbf{f}}_j^\top + \mathbf{Z}_{\Delta \mathbf{x}_j^\top}\mathbf{Y}$. We refer to these approximate JVP affine spaces as the \textit{web of affine spaces}.  


\subsection{Web of Affine Spaces Optimization}

In this section, we overview the mathematical side of the Web of Affine Spaces (WASP) Optimization.  We specify the algorithmic details that instantiate this math in practice in \S\ref{sec:algorithmic_details}.  

Suppose $\Delta \mathbf{X}$ is a full rank $n \times r$ matrix where $r \geq n$.  We consider the columns of $\Delta \mathbf{X}$ to be $r$ separate tangent vectors where the $i$-th column is denoted as $\Delta \mathbf{x}_i$.  Assume we have a current input, $\mathbf{x}_k$, a selected tangent vector, $\Delta \mathbf{x}_i$, and a JVP corresponding to $\mathbf{x}_k$ in the $\Delta \mathbf{x}_i$ direction, $\Delta \mathbf{f}_i$ (likely computed using Equation \ref{eq:finite_diff}).  Also, assume we have a web of affine spaces matrix, $\hat{\Delta \mathbf{F}}$.    




We cast the optimization described above as a modified version of Equation \ref{eq:least_squares_with_approximation} with an added constraint:

\begin{equation}
\label{eq:wasp}
\begin{gathered}
\frac{\partial f}{\partial \mathbf{x}}\bigg|_{\mathbf{x}_k}^\top  \approx \argmin_{\mathbf{D}^\top}||\Delta\mathbf{X}^\top \mathbf{D}^\top - \hat{\Delta\mathbf{F}}^\top||^2_F, \\  
s.t. \ \ \Delta \mathbf{x}_i^\top \mathbf{D}^\top = \Delta\mathbf{ \mathbf{f}}_i^\top
\end{gathered}
\end{equation}

\noindent This constrained optimization best matches the intersection of the web of affine spaces, specified in the objective function, while also restricting the solution to lie on the affine solution space, specified in the constraint.  

The solution to Equation \ref{eq:wasp} is the following:

\begin{equation}
\label{eq:solution}
\begin{bmatrix} \mathbf{D}^{* \top} \\ \mathbf{\Lambda}^{* \top}\end{bmatrix} = \begin{bmatrix} 
2\Delta\mathbf{X} \Delta\mathbf{X}^\top & -\Delta\mathbf{x}_i \\ 
\Delta\mathbf{x}_i^\top & 0
\end{bmatrix}^{-1}\begin{bmatrix} 2\Delta\mathbf{X} \hat{\Delta\mathbf{F}}^\top\\ \Delta \mathbf{f}_i^\top\end{bmatrix}.
\end{equation}

\noindent This solution is derived using a Karush-Kuhn-Tucker (KKT) system, as seen in detail in \S\ref{sec:proof}.  Here, $\mathbf{D}^{* \top} \in \mathbb{R}^{n \times m}$ is the transpose of the approximate derivative matrix and $\mathbf{\Lambda}^{* \top} \in \mathbb{R}^{1 \times m}$ is a vector of Lagrange multipliers.  We can rewrite the solution in Equation \ref{eq:solution} by taking the block matrix inverse of the left matrix \cite{horn2012matrix}:

\begin{equation}
\label{eq:solution_block}
\begin{gathered}
\begin{bmatrix} \mathbf{D}^{* \top} \\ \mathbf{\Lambda}^{* \top}\end{bmatrix} = 
\begin{bmatrix} 
\mathbf{A}^{-1} - \mathbf{A}^{-1}\Delta\mathbf{x}_is_i^{-1} \Delta \mathbf{x}_i^\top \mathbf{A}^{-1} & \mathbf{A}^{-1}\Delta \mathbf{x}_is_i^{-1} \\ 
-s_i^{-1}\Delta\mathbf{x}_i^\top \mathbf{A}^{-1} & s_i^{-1}
\end{bmatrix}\\ \begin{bmatrix} 2\Delta\mathbf{X} \hat{\Delta\mathbf{F}}^\top\\ \Delta \mathbf{f}_i^\top\end{bmatrix} \\
\mathbf{A} = 2\Delta\mathbf{X} \Delta\mathbf{X}^\top, \ s_i = \Delta \mathbf{x}_i^{\top}\mathbf{A}^{-1}\Delta \mathbf{x}_i.
\end{gathered}
\end{equation}

\noindent Here, $s_i$ is the Schur Complement of the block matrix $\mathbf{A} = 2\Delta\mathbf{X} \Delta\mathbf{X}^\top$.  

Because we do not use the Lagrange multipliers in this work, we can use Equation \ref{eq:solution_block} to more directly solve for $\mathbf{D}^{* \top}$:


\begin{equation}
\footnotesize
\label{eq:solution_d}
\begin{gathered}
\mathbf{D}^{* \top} = \mathbf{A}^{-1}(\mathbf{I}_{n \times n} - s_i^{-1}\Delta\mathbf{x}_i \Delta \mathbf{x}_i^\top \mathbf{A}^{-1})2\Delta\mathbf{X} \hat{\Delta\mathbf{F}}^\top +  \\ s_i^{-1}\mathbf{A}^{-1}\Delta \mathbf{x}_i\Delta \mathbf{f}_i^\top.   
\\
\end{gathered}
\end{equation}

\noindent Here, $\mathbf{I}_{n \times n}$ is an $n \times n$ identity matrix, and $\mathbf{A}$ and $s$ can be found in Equation \ref{eq:solution_block}.  For interested readers, we discuss the geometric significance of Equation \ref{eq:solution_d} in the Appendix \S\ref{sec:geometric_interpretation_of_solution}.  

Again, this section has only covered the mathematical procedure behind the Web of Affine Spaces Optimization.  In \S\ref{sec:algorithmic_details}, we overview our algorithm that utilizes this optimization, including how to initialize and update $\hat{\Delta \mathbf{F}}$, how to preprocess and cache parts of Equation \ref{eq:solution_d} to accelerate the optimization at runtime, how to determine if $\mathbf{D}^{* \top}$ is an acceptable approximation, and how to achieve a more accurate result if the expected error is too high.

\subsection{Derivation of Web of Affine Spaces Solution}
\label{sec:proof}

In this section, we derive the solution specified in Equation \ref{eq:solution}.  First, note that another way of writing the objective function $||\Delta\mathbf{X}^\top \mathbf{D}^\top - \hat{\Delta\mathbf{F}}^\top||^2_F$ is the following: 

$$
 \ tr( \ (\Delta\mathbf{X}^\top \mathbf{D}^\top - \hat{\Delta\mathbf{F}}^\top)^\top (\Delta\mathbf{X}^\top \mathbf{D}^\top - \hat{\Delta\mathbf{F}}^\top) \ ),
$$

\noindent where $tr$ is the matrix trace.  We now multiply the terms within the trace function:

$$
\begin{gathered}
tr( \ (\mathbf{D}\Delta \mathbf{X} - \hat{\Delta \mathbf{F}}) ( \Delta\mathbf{X}^\top \mathbf{D}^\top - \hat{\Delta\mathbf{F}}^\top) \ ) = \\
tr( \ \mathbf{D}\Delta\mathbf{X}\Delta\mathbf{X}^\top\mathbf{D}^\top - 2\hat{\Delta \mathbf{F}}\Delta\mathbf{X}^\top\mathbf{D}^\top + \hat{\Delta \mathbf{F}}\hat{\Delta \mathbf{F}}^\top \ ).
\end{gathered}
$$

Writing the whole optimization out in this form:

$$
\begin{gathered}
\argmin_{\mathbf{D}^\top} \ tr( \mathbf{D}\Delta\mathbf{X}\Delta\mathbf{X}^\top\mathbf{D}^\top - 2\hat{\Delta \mathbf{F}}\Delta\mathbf{X}^\top\mathbf{D}^\top + \hat{\Delta \mathbf{F}}\hat{\Delta \mathbf{F}}^\top ), \\
s.t. \ \ \Delta\mathbf{x}_i^\top \mathbf{D}^\top = \Delta\mathbf{f}_i^\top   
\end{gathered}
$$

\noindent we form the Lagrangian of the optimization:

$$
\begin{gathered}
\mathcal{L}(\mathbf{D}^\top, \mathbf{\Lambda}^\top) = tr( \ \mathbf{D}\Delta\mathbf{X}\Delta\mathbf{X}^\top\mathbf{D}^\top - 2\hat{\Delta \mathbf{F}}\Delta\mathbf{X}^\top\mathbf{D}^\top + \\ \hat{\Delta \mathbf{F}}\hat{\Delta \mathbf{F}}^\top \ ) - (\Delta\mathbf{x}_i^\top \mathbf{D}^\top - \Delta\mathbf{f}_i^\top)\mathbf{\Lambda}
\end{gathered}
$$

\noindent where $\mathbf{\Lambda} \in \mathbb{R}^{m \times 1} $ are the Lagrange multipliers.  

A first-order necessary condition for an optimal solution is that the Karush-Kuhn-Tucker (KKT) conditions are satisfied \cite{nocedal1999numerical}. Specifically, for an equality constrained problem, this means that the partial derivatives of the Lagrangian with respect to both the decision variables and the Lagrange multipliers (associated with the equality constraints) are zero: 

$$
\frac{\partial \mathcal{L}}{\partial \mathbf{D}^\top} = 0, \ \  \frac{\partial \mathcal{L}}{\partial \mathbf{\Lambda}^\top} = 0.
$$

We start with the first requirement:

$$
\begin{gathered}
\frac{\partial \mathcal{L}}{\partial \mathbf{D}^\top} = (\frac{\partial }{\partial \mathbf{D}^\top} \ \mathbf{D}\Delta\mathbf{X}\Delta\mathbf{X}^\top\mathbf{D}^\top - 2\hat{\Delta \mathbf{F}}\Delta\mathbf{X}^\top\mathbf{D}^\top + \\  \hat{\Delta \mathbf{F}}\hat{\Delta \mathbf{F}}^\top)^\top - \frac{\partial }{\partial \mathbf{D}^\top} \ \Delta\mathbf{x}_i^\top\mathbf{D}^\top\mathbf{\Lambda} = \\
(2\mathbf{D}\Delta\mathbf{X}\Delta\mathbf{X}^\top - 2\hat{\Delta \mathbf{F}}\Delta\mathbf{X}^\top)^\top - \Delta\mathbf{x}_i\mathbf{\Lambda}^\top = \\
2\Delta\mathbf{X}\Delta\mathbf{X}^\top\mathbf{D}^\top - 2\Delta\mathbf{X}\hat{\Delta \mathbf{F}}^\top - \Delta\mathbf{x}_i\mathbf{\Lambda}^\top.
\end{gathered}
$$

We now set the term equal to zero:

$$
\begin{gathered}
2\Delta\mathbf{X}\Delta\mathbf{X}^\top\mathbf{D}^\top - 2\Delta\mathbf{X}\hat{\Delta \mathbf{F}}^\top - \Delta\mathbf{x}_i\mathbf{\Lambda}^\top = 0 \\ 
\Longrightarrow 2\Delta\mathbf{X}\Delta\mathbf{X}^\top\mathbf{D}^\top - \Delta\mathbf{x}_i\mathbf{\Lambda}^\top = 2\Delta\mathbf{X}\hat{\Delta \mathbf{F}}^\top.
\end{gathered}
$$

For the second requirement from the Lagrangian, $\frac{\partial \mathcal{L}}{\partial \mathbf{\Lambda}^\top} = 0$, we have the following:

$$
\begin{gathered}
\Delta\mathbf{x}_i^\top \mathbf{D}^\top - \Delta\mathbf{f}_i^\top = 0 \\ 
\Longrightarrow \Delta\mathbf{x}_i^\top \mathbf{D}^\top = \Delta\mathbf{f}_i^\top.
\end{gathered}
$$

Putting the previous components together into a KKT system, we have the following matrix equation:

$$
\begin{bmatrix} 2\Delta\mathbf{X}\Delta\mathbf{X}^\top & -\Delta \mathbf{x}_i \\ 
\Delta\mathbf{x}_i^\top & 0
\end{bmatrix} 
\begin{bmatrix} 
\mathbf{D}^{*\top} \\ \mathbf{\Lambda}^{*\top}
\end{bmatrix} = \begin{bmatrix} 2\Delta\mathbf{X}\hat{\Delta \mathbf{F}}^\top \\ \Delta\mathbf{f}_i^\top\end{bmatrix}.
$$

Using the matrix inverse, our final solution is the following:

$$
\begin{bmatrix} 
\mathbf{D}^{*\top} \\ \mathbf{\Lambda}^{*\top}
\end{bmatrix} = \begin{bmatrix} 2\Delta\mathbf{X}\Delta\mathbf{X}^\top & -\Delta \mathbf{x}_i \\ 
\Delta\mathbf{x}_i^\top & 0
\end{bmatrix}^{-1} \begin{bmatrix} 2\Delta\mathbf{X}\hat{\Delta \mathbf{F}}^\top \\ \Delta\mathbf{f}_i^\top\end{bmatrix},
$$

\noindent matching the solution seen in Equation \ref{eq:solution}. $\Box$
\section{Algorithmic Details}
\label{sec:algorithmic_details}

In this section, we present algorithms that transform the mathematical framework from the previous section into a practical, implementable form.  Pseudocode for our approach is found in Algorithms \ref{alg:wasp_setup}--\ref{alg:close_enough}.   


\algWASPsetup
\algGetTangentBundle
\algWASP
\algCloseEnough

\subsection{Approximate Differentiation as Iterative Process}

Our algorithm for computing approximate derivatives for a sequence of inputs ($\mathbf{x}_k$, $\mathbf{x}_{k+1}$, ...) is structured as an iterative process. This process centers around three matrices introduced in previous sections: (1) $\mathbf{D}^*$, the current approximate derivative matrix; (2) $\hat{\Delta \mathbf{F}}$, the current matrix of approximate JVPs; and (3) $\Delta \mathbf{X}$, the matrix of tangents.

Given an input in the sequence, $\mathbf{x}_k$, our algorithm aims to compute an approximate derivative at that input, $\hat{\frac{\partial f}{\partial \mathbf{x}}}\big|_{\mathbf{x}_k}$. The process begins by using the $\Delta \mathbf{X}$ matrix and current $\hat{\Delta \mathbf{F}}$ matrix to compute an updated version of $\mathbf{D}^*$. Subsequently, the new $\mathbf{D}^*$ matrix is used to update $\hat{\Delta \mathbf{F}}$. These two steps are repeated iteratively for the current input $\mathbf{x}_k$ until there is evidence that the current $\mathbf{D}^*$ matrix is close enough to the ground truth derivative matrix, $\frac{\partial f}{\partial \mathbf{x}}\big|_{\mathbf{x}_k}$.

This procedure is applied to all inputs in the sequence, interleaving updates to the $\mathbf{D}^*$ and $\hat{\Delta \mathbf{F}}$ matrices on-the-fly. Detailed steps are presented in the sections below. 

\subsection{Algorithm Explanation}
\label{sec:algorithm_flow}

Our approach begins at Algorithm \ref{alg:wasp_setup}, which outputs a \textit{cache object}.  This cache object holds key data that will be utilized during runtime, such as the tangent matrix, $\Delta \mathbf{X}$ (initialized in Algorithm  \ref{alg:tangent_bundle_matrix}) and the web of affine spaces matrix, $\hat{\Delta \mathbf{F}}$.  Algorithm  \ref{alg:wasp_setup} serves as a one-time preprocessing step, with no part of this subroutine being re-executed at runtime. 

The runtime component of our algorithm is detailed in Algorithm  \ref{alg:wasp}. This subroutine takes as input the function to be differentiated, $f$, the current input at which the derivative will be approximated, $\mathbf{x}_k$, the number of function inputs, $n$, a cache object generated by Algorithm  \ref{alg:wasp_setup}, and two distance threshold values, $d_\theta$ and $d_\ell$.  This algorithm consists of five main steps:

\vspace{4pt}

\subsubsection{Ground-truth JVP computation (Alg. \ref{alg:wasp}, lines 4--5)}

A ground truth JVP, $\Delta \mathbf{f}_i$, is computed in the direction $\Delta \mathbf{x}_i$ at the given input $\mathbf{x}_k$ using Equation \ref{eq:finite_diff}. 

\vspace{4pt}

\begin{figure*}[t!]
\centering
\includegraphics[width=\textwidth]{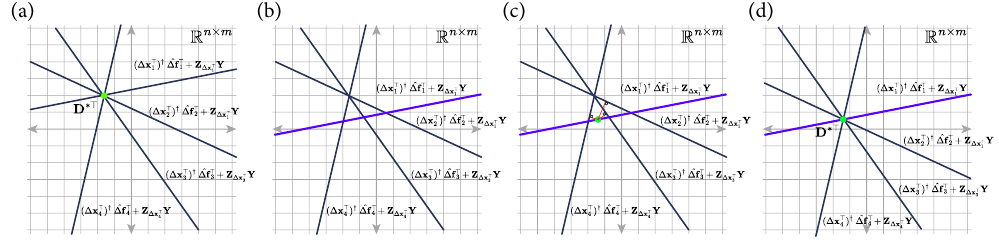}
\caption{
(a) The web of affine spaces, encoded as the columns of the $\hat{\Delta \mathbf{F}}$ matrix, start an iteration as intersecting at the previously computed derivative solution, $\mathbf{D}^{* \top}$.  (b) When a ground truth directional derivative is computed ($\Delta \mathbf{f}_1$ in this case), the affine space associated with its tangent direction ($\Delta \mathbf{x}_1$ in this case) is shifted away from the other affine spaces.  This affine space, illustrated as a purple line, is guaranteed to contain the transpose of the ground truth derivative at the current input.  (c) The constrained optimization step locates the point on the solution space that is closest (in terms of Euclidean distance) to the web of other affine spaces.  (d) This point is the transpose of the approximate derivative at the current input, and the web of affine spaces (via the $\hat{\Delta \mathbf{F}}$ matrix) is updated such that they now intersect at this new point.  The space is now ready for either another iteration of the algorithm on the same input, if needed, or the next input in the sequence, $\mathbf{x}_{k+1}$.  
}
\label{fig:explanation}
\vspace{-0.4cm}
\end{figure*}

\subsubsection{Error detection and correction (Alg. \ref{alg:wasp}, lines 6--9 \& 17--18)}


Error detection involves comparing the current approximation of the $i$-th JVP, $\hat{\Delta \mathbf{f}}_i$, with the just computed ground-truth JVP, $\Delta \mathbf{f}_i$.  These vectors are compared using Algorithm \ref{alg:close_enough}.  Specifically, this subroutine checks whether the angle and norm between the two vectors are below specified threshold values, $d_\theta$ and $d_\ell$, respectively.  If this subroutine returns \textit{True}, it indicates that $\hat{\Delta \mathbf{f}_i}$ aligns closely in direction and magnitude with $\Delta \mathbf{f}_i$, suggesting that the approximate derivative used to compute $\hat{\Delta \mathbf{f}}_i$ is likely a good approximation of the ground-truth derivative.  Conversely, if this subroutine returns \textit{False}, the current approximate derivative must not match the true derivative, and another iteration of the algorithm is taken on the same input $\mathbf{x}_k$.  This loop continues until the approximate JVP is deemed close enough to the ground truth JVP (lines 17--18).

\vspace{4pt}

\subsubsection{Ground-truth JVP update (Alg. \ref{alg:wasp}, line 10)}

Prior to line 10 in Algorithm \ref{alg:wasp}, the $\hat{\Delta \mathbf{F}}$ matrix satifies the equation $\mathbf{D}^* \Delta \mathbf{X} = \hat{\Delta \mathbf{F}}$, where $\mathbf{D}^*$ here is the most recently computed approximate derivative.  In other words, recalling \S\ref{sec:web_of_affine_spaces}, the affine spaces $(\Delta \mathbf{x}_j^\top)^\dagger \hat{\Delta \mathbf{f}_j}^\top + \mathbf{Z}_{\Delta \mathbf{x}_j^\top}\mathbf{Y}$ for all $j \in \{1, ..., n\}$ (where $\mathbf{Z}_{\Delta \mathbf{x}_j^\top}$ is the null-space matrix of the $1 \times n$ matrix $\Delta \mathbf{x}_j^\top$) only intersect at a single point: the transpose of the previously computed solution $\mathbf{D}^{* \top}$, illustrated in Figure \ref{fig:explanation}a.  

After the current approximation of the $i$-th JVP, $\hat{\Delta \mathbf{f}}_i$, is compared with the just computed ground-truth JVP, $\Delta \mathbf{f}_i$, the ground truth can now replace the approximation in the web of affine spaces matrix, $\hat{\Delta \mathbf{F}}$.  This effectively shifts the affine space associated with the $i$-th JVP, leaving the other $n-1$ affine spaces still intersecting at the previous solution $\mathbf{D}^{* \top}$.  This shift is illustrated in Figure \ref{fig:explanation}b.

\vspace{4pt} 

\subsubsection{Optimization (Alg. \ref{alg:wasp}, lines 10--14)}

Line 14 reflects the mathematical procedure specified in Equation \ref{eq:solution_d}.  This process locates the point on the affine space $(\Delta \mathbf{x}_i^\top)^\dagger \Delta \mathbf{f}_i^\top + \mathbf{Z}_{\Delta \mathbf{x}_i^\top}\mathbf{Y}$ that is closest (in terms of Euclidean distance) to the other $n-1$ affine spaces that are still intersecting at the previous solution, illustrated in Figure \ref{fig:explanation}c.  Cached matrices $\mathbf{C}_1$ and $\mathbf{C}_2$ are used to speed up this result without needing to compute matrix inverses at runtime.  The output from this step is a new matrix $\mathbf{D}^{* \top}$.  

\vspace{4pt} 

\subsubsection{Web of affine spaces matrix update (Alg. \ref{alg:wasp} line 15)}

The web of affine spaces matrix is updated such that $\mathbf{D}^* \Delta \mathbf{X} = \hat{\Delta \mathbf{F}}$.  After this update, the affine spaces within $\hat{\Delta \mathbf{F}}$ will all intersect again at the just computed $\mathbf{D}^{* \top}$, illustrated in Figure \ref{fig:explanation}d.  The matrix is now ready for either another iteration of the algorithm on the same input, if needed, or the next input in the sequence, $\mathbf{x}_{k+1}$.

\subsection{Initializing and Updating Matrices}

Two key components of our approach are the tangent matrix, $\Delta \mathbf{X}$, and the web of affine spaces matrix, $\hat{\Delta \mathbf{F}}$. The tangent matrix is initialized in Algorithm  \ref{alg:tangent_bundle_matrix}, where it is specifically constructed as a random orthonormal matrix, meaning its columns have unit length and are mutually orthogonal. For interested readers, we provide full analysis and rationale for this structure in the Appendix \S\ref{sec:tangent_matrix_structure}. To generate a random orthonormal matrix, we apply singular value decomposition (SVD) to a uniformly sampled random $n \times n$ matrix. 

The web of affine spaces matrix is initialized as a zero matrix in algorithm  \ref{alg:wasp_setup}.  This matrix will dynamically update through the error detection and correction mechanism as needed.  For instance, on the first call to Algorithm  \ref{alg:wasp}, the \texttt{close\_enough} function will almost surely return $False$ for several iterations, allowing the matrix to progressively improve in accuracy over these updates.    

\subsection{Run-time Analysis}

In this section, we analyze the run-time of our approach compared to alternatives.  We will use the notation $\texttt{rt}(.)$ to denote the run-time of a subroutine.  Approximate runtimes for several algorithms can be seen in Table \ref{tab:runtimes}.    

For WASP, $\mathbf{P}\mathbf{Q}$ and $\mathbf{R}\mathbf{S}$ are the matrix multiplications in Equation \ref{eq:solution_d} (after preprocessing) and $q$ is the number of iterations needed to achieve sufficient accuracy.  In many cases, $q=1$, though note that even in the worst case, it is guaranteed that $q \leq n$ because when $k=n$, all columns in $\hat{\Delta \mathbf{F}}$ will be ground-truth JVPs.  

Comparing to other approaches, we see that WASP shifts the computational burden of scaling $m$ and $n$ to matrix multiplications rather than repeated forward or reverse calls to $f$. This adjustment is expected to yield run-time improvements when $\texttt{rt}(f) > \texttt{rt}(\mathbf{P}\mathbf{Q}) + \texttt{rt}(\mathbf{R}\mathbf{S})$, particularly when a low value of $q$ is achievable due to a sequence of closely related inputs.

\begin{table}[t!]
    \centering
    \caption{Approximate run-times for derivative computation approaches}
    \renewcommand{\arraystretch}{1.5} 
    \begin{tabular}{|>{\centering\arraybackslash}m{0.2\linewidth}|>{\centering\arraybackslash}m{0.7\linewidth}|}
        \hline
         \textit{Approach} & \textit{Approximate runtime} \\ \hline
         \textbf{WASP} & 
         $$ 
         \begin{gathered}
         \texttt{rt}(f) + q [ \ \texttt{rt}(f) + \texttt{rt}(\underbrace{\mathbf{P}}_{\mathbb{R}^{n \times n}}\underbrace{\mathbf{Q}}_{\mathbb{R}^{n \times m}}) + \\
         \texttt{rt}(\underbrace{\mathbf{R}}_{\mathbb{R}^{n \times 1}}\underbrace{\mathbf{S}}_{\mathbb{R}^{1 \times m}}) \ ]
         \end{gathered}
         $$ \\ \hline
        \textbf{Finite Differencing} & 
        $$ 
        (n+1) \cdot \texttt{rt}(f)
        $$ \\ \hline
        \textbf{Forward AD} & 
        $$ 
        n \cdot \texttt{rt}(\texttt{with\_tangents}(f))
        $$ \\ \hline
        \textbf{Reverse AD} & 
        $$ 
        \begin{gathered}
        \texttt{rt}(\texttt{build\_computation\_graph}(f)) + \\
        m \cdot \texttt{rt}(\texttt{reverse}(f))
        \end{gathered}
        $$ \\ \hline
    \end{tabular}
    \label{tab:runtimes}
    \vspace{-10pt}
\end{table}

\section{Evaluation 1: Comparison on Benchmark Function}
\label{sec:evaluation}

In Evaluation 1, we compare our approach to several other derivative computation approaches on a benchmark function.


\subsection{Procedure}
\label{sec:procedure}

Evaluation 1 follows a three step procedure: (1) The benchmark function shown in Algorithm \ref{alg:benchmark} in initialized with given parameters $n$, $m$, and $o$.  This function will remain fixed and deterministic through the remaining steps; (2) a random walk trajectory is generated following the process seen in Algorithm \ref{alg:get_random_walk} with a given number of waypoints ($w$), dimensionality ($n$), and step length ($\lambda$); (3) For all conditions, derivatives of the benchmark function are computed in order on the $w$ inputs in the random walk trajectory.  This trajectory is kept fixed for all conditions.  Metrics are recorded for all conditions.       

The benchmark function used in this experiment is a randomly generated composition of sine and cosine functions. This function, detailed in Algorithm \ref{alg:benchmark}, was designed to be highly parameterizable, allowing for any number of inputs ($n$), outputs ($m$), and operations per output ($o$). Sine and cosine were selected for their smooth derivatives and composability, given their infinite domain and bounded range.  Moreover, numerous subroutines in robotics and related fields involve many compositions of sine and cosine functions, making this function a reasonable analogue of these processes.

Evaluation 1 is divided into several sub-experiments, detailed below. Each sub-experiment varies which parameters of the procedure are allowed to change or remain fixed, aiming to assess different facets of the differentiation process.


\algBenchmark
\algGetRandomWalk


\subsection{Conditions}
\label{sec:conditions}

Evaluation 1 compares five conditions: 

\begin{enumerate}
    \item Reverse-mode automatic differentiation with PyTorch \cite{paszke2019pytorch} backend (abbreviated as \textit{RAD-PyTorch})
    \item Finite-differencing with NumPy \cite{harris2020array} backend (abbreviated as \textit{FD})
    \item Simultaneous Perturbation Stochastic Approximation \cite{spall1992multivariate} with NumPy \cite{harris2020array} backend (abbreviated as \textit{SPSA})
    \item Web of Affine Spaces Optimization with orthonormal $\Delta \mathbf{X}$ matrix and NumPy \cite{harris2020array} backend (abbreviated as \textit{WASP-O}).
    \item Web of Affine Spaces Optimization with random, non-orthonormal $\Delta \mathbf{X}$ matrix and NumPy \cite{harris2020array} backend (abbreviated as \textit{WASP-NO}).
\end{enumerate}

All conditions in this section are implemented in Python and executed on a Desktop computer with an Intel i9 4.4GHz processor and 32 GB of RAM.  To ensure a fair comparison, the underlying benchmark function code remained consistent across all conditions, with backend switching managed via Tensorly\footnote{\href{https://tensorly.org/stable/index.html}{https://tensorly.org/stable/index.html}}.  The conditions in this section were required to remain fully compatible with the given benchmark function implemented in Tensorly, without any modifications, optimizations, or code adjustments specific to individual conditions.

We note that JAX \cite{jax2018github}, a widely-used automatic differentiation library, is not included in this section. Although JAX is theoretically compatible with Tensorly, we found that extensive code modifications were required to enable optimal performance through just-in-time (JIT) compilation. Preliminary tests showed that JAX, when not JIT-compiled, exhibited run-times that were unreasonably slow and did not reflect its full potential. Therefore, we chose not to report non-JIT-compiled JAX results in this section. 

For readers interested in further insights, supplementary results are provided in the Appendix (\S\ref{sec:evaluation1_supplement}). These results relax the Tensorly-based uniformity constraints, allowing modifications, optimizations, and compilations to strive for maximum possible performance. Conditions in this supplemental section include JAX implementations compiled for both CPU and GPU, as well as Rust-based implementations.





\subsection{Metrics}
\label{sec:metrics}

We record and report on three metrics in Evaluation 1:

\begin{enumerate}
    \item Average runtime (in seconds) of derivative computation through the sequence of $w$ inputs.
    \item Average number of calls to the \texttt{benchmark} function for a derivative computation through the sequence of $w$ inputs.  Note that this value will be constant for all conditions aside from WASP.
    \item Average accuracy of the derivative through the sequence of $w$ inputs.  We measure accuracy as the sum of \textit{angular error} (Algorithm \ref{alg:angular_error}) and \textit{norm error} (Algorithm \ref{alg:norm_error}).  The components of this error measure to what extent the rows of the returned derivative are facing the correct direction and have the correct magnitude, respectively.  If this value is zero, the returned derivative exactly matches the ground-truth derivative.
\end{enumerate}


\algAngularError
\algNormError

\subsection{Sub-experiment 1: Gradient Calculations}
\label{sec:subexp1}

In sub-experiment 1, we run the procedure outlined in \S\ref{sec:procedure} with parameters, $m=1$, $o=1000$, $w=100$, $\lambda=0.05$, $d_{\theta} = 0.1$, $d_{\ell} = 0.1$, and $n=\{ 1, 50, 100, 150, ..., 1000 \}$.  Our goal in sub-experiment 1 is to observe how the different conditions scale as the number of function inputs $n$ grows while $m$ remains fixed at $1$.  In other words, the benchmark function here is a scalar function and its derivative is a $1 \times n$ row-vector gradient.

Results for sub-experiment 1 can be seen in Figure \ref{fig:se1} (top row).  We observe that both WASP conditions outperform all other methods in runtime, except for SPSA, up to approximately $600$ inputs. Beyond this point, RAD-PyTorch. Additionally, the WASP conditions require orders of magnitude fewer function calls compared to FD, reflecting the goal of WASP to reuse recent information to avoid redundant calculation at the current input. While SPSA achieves the fastest runtime in sub-experiment 1, it incurs significant error. In contrast, the WASP conditions demonstrate much higher accuracy. Notably, the WASP condition utilizing the orthonormal tangent matrix structure maintains very low error, even for functions with up to 1000 inputs.

\begin{figure*}[t!]
\centering
\includegraphics[width=\textwidth]{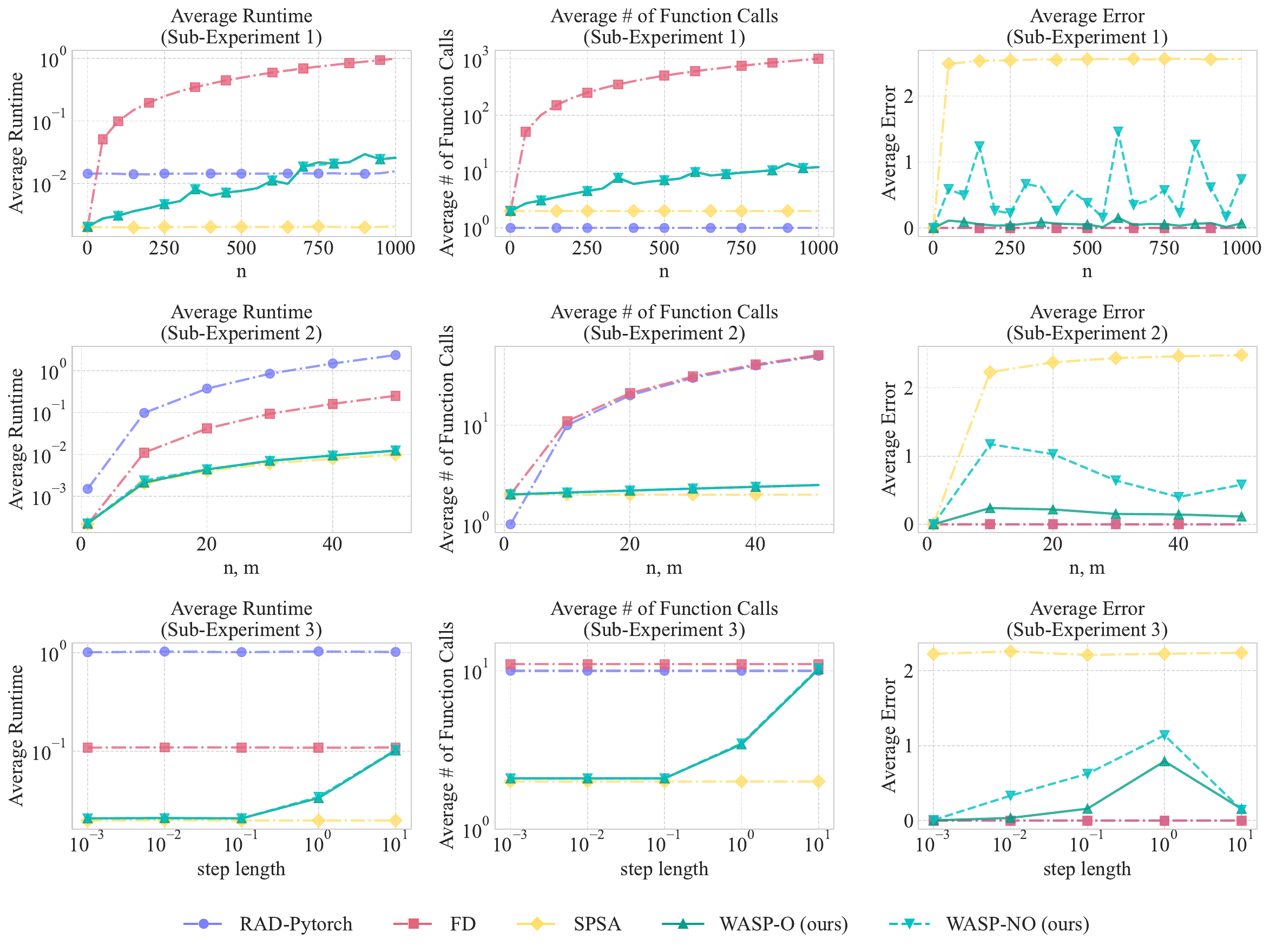}
\caption{Results for Evaluation 1, Sub-Experiment 1 (top) Sub-Experiment 2 (middle) and Sub-Experiment 3 (bottom) }
\label{fig:se1}
\vspace{-0.4cm}
\end{figure*}

\subsection{Sub-experiment 2: Square Jacobian Calculations}
\label{sec:subexp2}

In sub-experiment 2, we run the procedure outlined in \S\ref{sec:procedure} with parameters $(n,m) = (x, x)$ where $x \in \{ 1, 10, 20, 30, 40, 50 \}$, $o=1000$, $w=100$, $\lambda=0.05$, $d_{\theta} = 0.1$, and $d_{\ell} = 0.1$.  Our goal in sub-experiment 2 is to observe how the different conditions scale as the number of function inputs and number of function outputs both grow.  Thus, the benchmark function here is a vector function with the same number of inputs and outputs, and its derivative is an $n \times n$ square Jacobian.

Results for sub-experiment 2 can be seen in Figure \ref{fig:se1} (middle row).  The WASP conditions demonstrate greater efficiency compared to RAD-PyTorch and FD, achieving a runtime comparable to SPSA. This efficiency advantage is likely primarily due to the lower number of function calls, as seen in the middle graph.  Notably, WASP exhibits much lower error than SPSA, particularly in the variant incorporating the orthonormal matrix structure. This demonstrates that WASP more accurately approximates ground-truth Jacobian matrices.


\subsection{Sub-experiment 3: Varying step size}
\label{sec:subexp3}

In sub-experiment 3, we run the procedure outlined in \S\ref{sec:procedure} with parameters, $n=10$, $m=10$, $o=1000$, $w=100$, $d_{\theta} = 0.1$, $d_{\ell} = 0.1$, and $\lambda=\{ 0.001, 0.01, 0.1, 1, 10 \}$.  Our goal in sub-experiment 3 is to observe how the different conditions scale as the step-length in the random walk trajectory grows.  


Results for sub-experiment 3 can be seen in Figure \ref{fig:se1} (bottom row).  As expected, the performance of the WASP conditions generally declines as the step length increases. However, interestingly for a step length of $\lambda = 10$, the results show that the error approaches nearly zero, seemingly outperforming the trials with $\lambda = 1$ in terms of accuracy. This improved accuracy, however, comes at the cost of significantly more function calls, resulting in a much higher average runtime. Essentially, a step length of $\lambda = 10$ in this case was so large that the error detection and correction mechanism consistently triggered additional iterations until reaching the upper limit ($n$). As a result, the WASP conditions effectively defaulted to a standard finite-differencing strategy, hence why the WASP conditions are equivalent to the FD condition in terms of runtime and number of function calls in this scenario.  


\section{Evaluation 2: Error Propagation Analysis}

In \S\ref{sec:algorithm_flow}, we described the error detection and correction mechanism in our algorithm, which is designed to prevent error accumulation over a sequence of approximate derivative computations. In Evaluation 2, we analyze how errors propagate through our algorithm under different parameter settings, thereby evaluating the effectiveness of our proposed mechanism over long derivative sequences.

\begin{figure*}[t!]
\centering
\includegraphics[width=\textwidth]{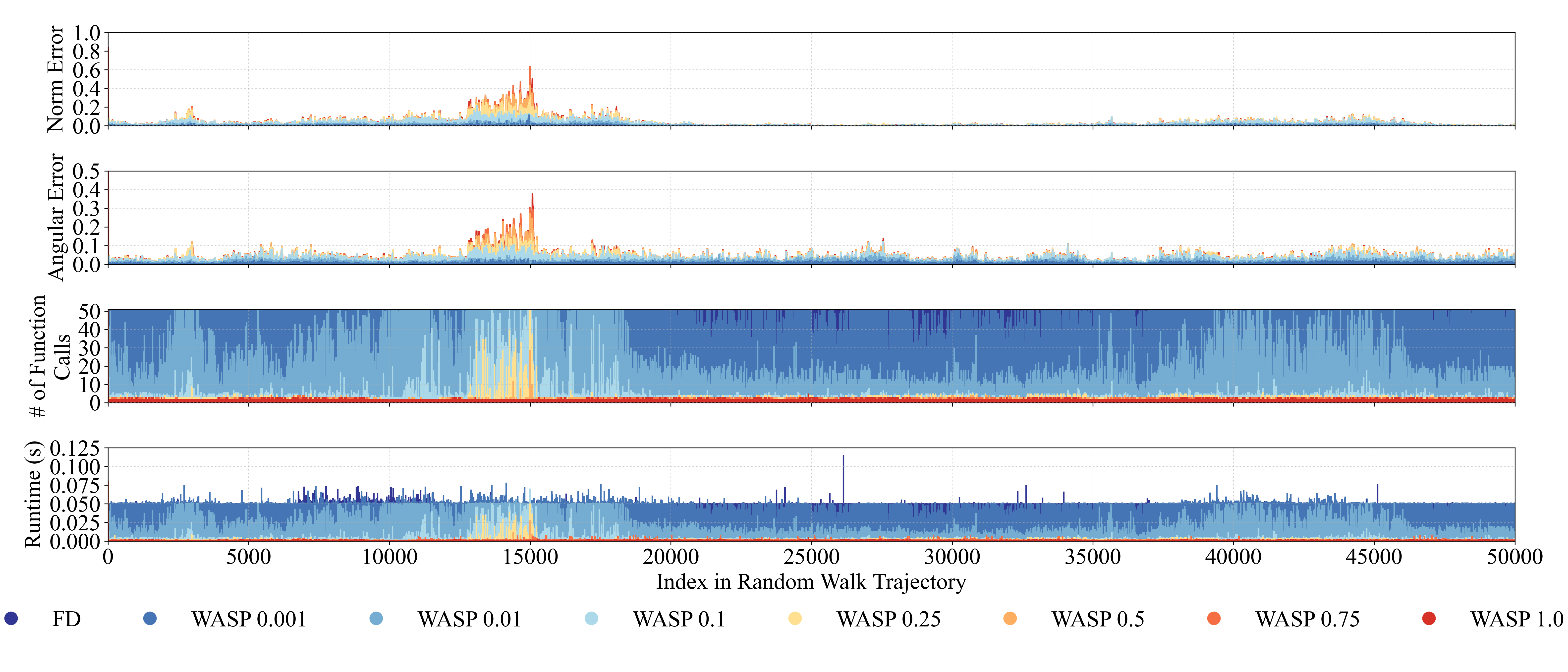}
\caption{Results for Evaluation 2.  These results show the norm error (first row), angular error (second row), the number of function calls (third row), and runtime (fourth row) per derivative computation over a sequence of 50{,}000 derivatives (x-axis). }
\label{fig:se4}
\vspace{-0.3cm}
\end{figure*}

\subsection{Procedure}

Evaluation 2 follows the same procedure as Evaluation 1, described in \S\ref{sec:procedure}.  We use parameters $n = 50$, $m = 1$, $o = 1000$, $w = 50{,}000$, $\lambda = 0.05$, where $n$ is the number of inputs to the benchmark function, $m$ is the number of outputs from the benchmark function, $o$ is the number of operations per output in the benchmark function, $w$ is the number of waypoints in the random walk trajectory, and $\lambda$ is the step length along the random walk trajectory.  

\subsection{Conditions}

The primary values we are varying and assessing in Evaluation 2 are the error threshold parameters, $d_\theta$ and $d_\ell$, outlined in \S\ref{sec:algorithm_flow}. 
 Specifically, we use parameter settings $(d_{\theta}, d_{\ell}) = (x, x)$ where $x \in \{ 0.001, 0.01, 0.1, 0.25, 0.5, 0.75, 1.0 \}$. These settings allow us to evaluate error behavior across different error thresholds (the $d_{\theta}$ and $d_{\ell}$ parameters) over a long input sequence (as specified by the $w$ parameter above). 
 
 The WASP method in this evaluation uses a fixed orthonormal $\Delta \mathbf{X}$ matrix shared across all $d_{\theta}$ and $d_{\ell}$ configurations. We also compare the WASP variants against standard Finite Differencing using a NumPy backend.  
 
 All conditions in Evaluation 2 are implemented in Python using Tensorly for backend switching and executed on a Desktop computer with an Intel i7 5.4GHz processor and 32 GB of RAM.

\subsection{Metrics}

We record and report on four metrics in Evaluation 2:

\begin{enumerate}
    \item Runtime (in seconds) per each derivative computation through the sequence of $w$ inputs.
    \item Number of calls to the \texttt{benchmark} function for each derivative computation through the sequence of $w$ inputs.
    \item The angular error of each derivative through the sequence of $w$ inputs (Algorithm \ref{alg:angular_error}).  
    \item The norm error of each derivative through the sequence of $w$ inputs (Algorithm \ref{alg:norm_error}).
\end{enumerate}

\subsection{Results}

Results for Eavluation 2 are shown in Figure \ref{fig:se4}. At a high level, we observe that WASP does not accumulate significant error over long sequences, even when high error thresholds are used. For instance, even at the $50{,}000$-th input, the errors remain low. In general, there are subtle ebbs and flows in error, naturally requiring more or fewer function calls throughout the sequence. At certain points, such as between the $13{,}000$-th and $15{,}000$-th inputs, WASP exhibits a transient increase in error under high thresholds, suggesting that this portion of the input sequence is less compatible with the WASP heuristic. However, even in these cases, the error remains within reasonable and usable bounds (e.g., less than $0.4$ radians from the ground truth gradient), and the algorithm successfully self-corrects after these brief periods of elevated error without diverging. As expected, using lower error thresholds consistently results in low error, but at the cost of additional function calls and runtime. In the limit as $d_{\theta}$ and $d_{\ell}$ approach $0$, both the accuracy and runtime performance converge to that of full finite differencing.

\section{Evaluation 3: Application in Robot Optimization}

In Evaluation 3, we compare our approach to several other derivative computation approaches in a robotics-based root-finding procedure.

\begin{figure}[t!]
\centering
\includegraphics[width=\columnwidth]{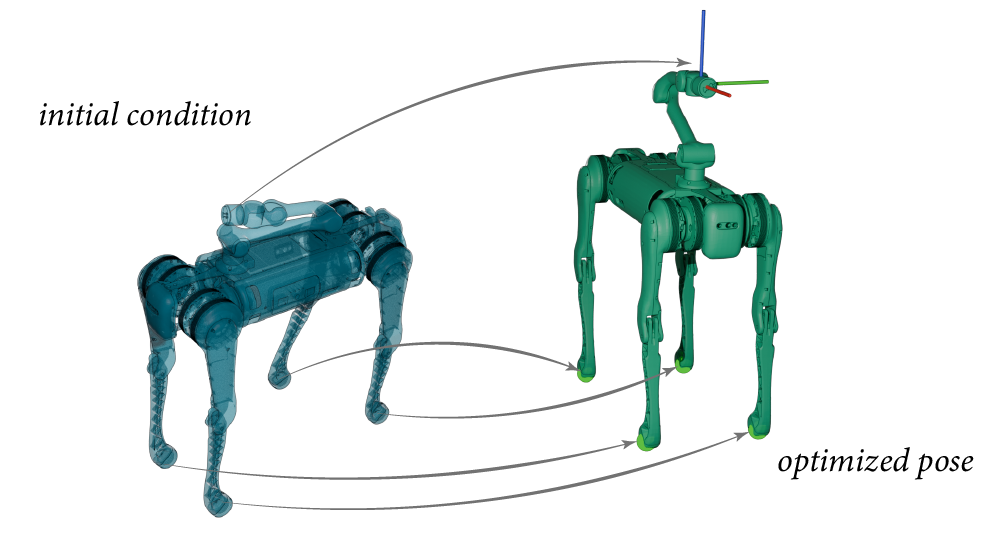}
\caption{ Evaluation 3 involves assessing performance in a robotic root-finding procedure, where the goal is to determine a robot pose that positions its feet and end-effector at predefined locations or orientations. }
\label{fig:robots}
\vspace{-0.2cm}
\end{figure}

\subsection{Procedure}

Evaluation 3 follows a three step procedure: (1) A robot state is sampled for a simulated Unitree B1 quadruped robot\footnote{\href{https://www.unitree.com/b1}{https://www.unitree.com/b1}} with a Z1 manipulator\footnote{\href{https://www.unitree.com/z1}{https://www.unitree.com/z1}} mounted on its back (shown in Figure \ref{fig:robots}).  This robot has 24 degrees of freedom (including a floating base to account for mobility).  This sampled state, $\mathbf{x}_0 \in \mathbb{R}^{24}$, will be an initial condition for an optimization process; (2) A Jacobian pseudo-inverse method (Algorithm \ref{alg:root_finding}) is used to find a root for a constraint function (Algorithm \ref{alg:constraint_function}).  The constraint function has five outputs: four for specifying foot placements and one for specifying the end-effector pose for the manipulator mounted on the back.  Thus, the Jacobian of this constraint function is a $5 \times 24$ matrix; (3) Step 1--2 are run 50 times per condition.  Metrics are recorded for all conditions.

\algJacobianPseudoinverse
\algConstraintFunction

\subsection{Conditions}
The conditions in Evaluation 3 are the same as those listed in \S\ref{sec:conditions}.  All conditions are implemented in Python and executed on a Desktop computer with an Intel i7 5.4GHz processor and 32 GB of RAM.  Only the CPU was used for these experiments and Tensorly was again used for all backend-switching to maintain uniformity.  

\subsection{Metrics}

We record and report on two metrics in Evaluation 3:

\begin{enumerate}
    \item Average runtime (in seconds) to converge on a robot configuration sufficiently close to the constraint surface.
    \item Average number of optimization steps needed to converge on a robot configuration sufficiently close to the constraint surface.
\end{enumerate}

\subsection{Results}

Results for Evaluation 3 can be seen in Table \ref{tab:evaluation2}.  The results show that the WASP conditions achieve significantly faster convergence compared to alternative approaches. Additionally, the orthonormal structure of the tangent matrix further enhances convergence efficiency. In contrast, the SPSA condition failed to converge, highlighting that certain derivative approximation methods may lack the accuracy required for some optimization procedures. 

\begin{table}[t]
    \centering
    \caption{Evaluation 3 results.  The $^*$ symbol means that the condition never converged in the maximum number of iterations (10,000).  Range values denote standard deviation.}
    \begin{tabular}{|>{\centering\arraybackslash}p{0.27\linewidth}|>{\centering\arraybackslash}p{0.27\linewidth}|>{\centering\arraybackslash}p{0.27\linewidth}|}
        \hline
         \textit{Approach} & \textit{Average runtime (seconds)} & \textit{Average \# of iterations} \\ \hline
        \textbf{RAD-PyTorch} & 72.3 $\pm$ 3.5 & 504 $\pm$ 23 \\ \hline
        \textbf{FD} & 50.0 $\pm$ 2.5 & 504 $\pm$  23\\ \hline
        \textbf{SPSA} & 114.5$^{*}$ $\pm$ 1.6 & 10,000$^{*}$ $\pm$ 0 \\ \hline
        \textbf{WASP-O} (ours) & 7.0 $\pm$ 0.4& 544 $\pm$ 26 \\ \hline
        \textbf{WASP-NO} (ours) & 9.5 $\pm$ 3.5 &  770 $\pm$ 310\\ \hline
    \end{tabular}
    \label{tab:evaluation2}
    \vspace{-5pt}
\end{table}
\section{Discussion}
\label{sec:discussion}

In this work, we introduced a coherence-based approach for efficiently calculating a sequence of derivatives. Our approach leverages a novel process, the Web of Affine Spaces (WASP) Optimization, to identify a point guaranteed to reside on an affine space containing the ground-truth derivative that also aligns well with prior related calculations. Through extensive evaluations, we demonstrate that our approach outperforms widely used techniques such as automatic differentiation and finite differencing in terms of efficiency, while delivering greater accuracy compared to other derivative approximation methods. In this section, we discuss the limitations and broader implications of our approach.

\subsection{Limitations}

We note several limitations of our work that suggest future avenues of research and extensions.  First, the derivatives produced by our approach are approximations. Although our method can, in theory, achieve high accuracy comparable to standard finite differencing, attaining such precision would necessitate additional iterations, likely compromising its efficiency. For applications demanding exact derivatives in all cases, alternative techniques may be more appropriate. 


Next, this paper focuses on presenting the math, algorithms, and initial proofs of concept for the WASP derivative approach. While we believe this technique has potential for impact across robotics and other fields, exploring its full range of applications and establishing best practices across many different problems is beyond the scope of this paper.  


Furthermore, as demonstrated in Evaluation 1, the effectiveness of our approach diminishes significantly as the gap between inputs increases. We aim to refine or reformulate aspects of our approach going forward to reduce its strong sensitivity to step size. 


Additionally, while our error detection and correction technique performs well in practice, it does not guarantee the accuracy of the approximations relative to ground-truth derivatives. Currently, the approach relies on JVPs (directional derivatives) as a proxy for true derivatives. While this provides a necessary condition for correctness, it is not a sufficient condition. In other words, the true derivative will always yield matching directional derivatives, but matching directional derivatives do not necessarily guarantee the underlying derivatives are correct.  Ideally, the error detection and correction mechanism, along with its associated parameters, would relate directly to the true derivatives. However, addressing this issue is inherently challenging and perhaps infeasible.  To illustrate, such a solution seems to elicit a circular reasoning problem: if one could sufficiently estimate the ground-truth derivative well enough to bound the approximation, that same information could be used to directly select a closer approximation to the true derivative in the first place.  This issue remains an open question that requires further investigation.

Lastly, while the current approach performs well for small to medium-sized problems, it does not scale effectively to large-scale functions. For example, applying this method to neural network training with millions or billions of parameters would be infeasible due to the prohibitive size of the matrices required for the optimization process. In future work, we plan to explore scalable adaptations of this approach that better manage storage and computational demands. 


\subsection{Implications}

Due to the ubiquity of derivative computation in robotics and beyond, we believe our work has the potential for broad impact and applicability.  For example, it could prove to be valuable in areas such as model predictive control, physics simulation, trajectory optimization, inverse kinematics, and more.  Our goal is to enable the community to leverage these derivatives to streamline computationally expensive subroutines, achieving performance gains with minimal code modifications. By doing so, we aim to unlock new levels of interactivity and adaptability for robots, empowering them to operate seamlessly and responsively in real-time environments. 





\IEEEpeerreviewmaketitle

\section*{Acknowledgments}
This work was supported by Office of Naval Research award N00014-24-1-2124


\bibliographystyle{plainnat}
\bibliography{references}


\section{Appendix}
\label{sec:appendix}

\subsection{Geometric Interpretation of Solution}
\label{sec:geometric_interpretation_of_solution}

Here, we explain the geometric interpretation of the solution in Equation \ref{eq:solution_d}.  First, we split the solution into two terms:  

\begin{equation}
\footnotesize
\label{eq:solution_terms}
\begin{gathered}
\mathbf{D}^{\top *} = \overbrace{\mathbf{A}^{-1}(\mathbf{I}_{n \times n} - s_i^{-1}\Delta\mathbf{x}_i \Delta \mathbf{x}_i^\top \mathbf{A}^{-1})2\Delta\mathbf{X} \hat{\Delta\mathbf{F}}^\top}^{\text{Term 1}} +  \\ \underbrace{s_i^{-1}\mathbf{A}^{-1}\Delta \mathbf{x}_i\Delta \mathbf{f}_i^\top}_{\text{Term 2}}  
\\
\mathbf{A} = 2\Delta\mathbf{X} \Delta\mathbf{X}^\top, \ s_i = \Delta \mathbf{x}_i^{\top}\mathbf{A}^{-1}\Delta \mathbf{x}_i.
\end{gathered}
\end{equation}

\noindent We overview each term separately.

\vspace{15pt}

\subsubsection{Geometric Interpretation of Term 1}

We begin by looking at the first component of Term 1:

$$
\mathbf{A}^{-1}(\mathbf{I}_{n \times n} - s_i^{-1}\Delta\mathbf{x}_i \Delta \mathbf{x}_i^\top \mathbf{A}^{-1}).
$$

\noindent This matrix can be rewritten in the following form:

\begin{equation}
\begin{gathered}
\mathbf{M}_i = \mathbf{A}^{-1}\mathbf{P}_i, \\
\mathbf{P}_i = \mathbf{I}_{n \times n} - \frac{\Delta\mathbf{x}_i\Delta\mathbf{x}_i^\top\mathbf{A}^{-1}}{\Delta \mathbf{x}_i^\top \mathbf{A}^{-1}\Delta \mathbf{x}_i}. 
\end{gathered} 
\end{equation}

\noindent Below, we present several propositions that contribute to a geometric understanding.

\begin{proposition}
\label{prop:orthogonal_p}
$\mathbf{P}_i\mathbf{y} \in (\mathbf{A}^{-1}\Delta\mathbf{x}_i)^\perp \ \forall \mathbf{y} \in \mathbb{R}^n$, i.e. the matrix-vector product $\mathbf{P}_i\mathbf{y}$ is orthogonal to $\mathbf{A}^{-1} \Delta \mathbf{x}_i$ for all $\mathbf{y} \in \mathbb{R}^n$
\end{proposition}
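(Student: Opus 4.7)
The plan is to verify the orthogonality by direct algebraic computation of the inner product $(\mathbf{A}^{-1}\Delta\mathbf{x}_i)^\top (\mathbf{P}_i \mathbf{y})$ and showing it vanishes for every $\mathbf{y} \in \mathbb{R}^n$. Since the claim is universally quantified over $\mathbf{y}$, it suffices to prove the stronger statement that the row vector $(\mathbf{A}^{-1}\Delta\mathbf{x}_i)^\top \mathbf{P}_i$ is identically zero, after which associativity gives the result.

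The key preliminary observation is that $\mathbf{A} = 2\Delta\mathbf{X}\Delta\mathbf{X}^\top$ is symmetric by construction, hence $\mathbf{A}^{-1}$ is also symmetric, so $\mathbf{A}^{-\top} = \mathbf{A}^{-1}$. This is the only nontrivial structural fact that needs to be invoked. First I would transpose $\mathbf{A}^{-1}\Delta\mathbf{x}_i$ to obtain $\Delta\mathbf{x}_i^\top \mathbf{A}^{-1}$, then multiply on the right by the definition $\mathbf{P}_i = \mathbf{I}_{n \times n} - \dfrac{\Delta\mathbf{x}_i\Delta\mathbf{x}_i^\top \mathbf{A}^{-1}}{\Delta\mathbf{x}_i^\top \mathbf{A}^{-1}\Delta\mathbf{x}_i}$, and distribute to obtain two terms.

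Next I would recognize that the numerator of the second term contains the scalar $\Delta\mathbf{x}_i^\top \mathbf{A}^{-1}\Delta\mathbf{x}_i$ sandwiched in the middle (namely $\Delta\mathbf{x}_i^\top \mathbf{A}^{-1} \cdot \Delta\mathbf{x}_i\Delta\mathbf{x}_i^\top \mathbf{A}^{-1}$), which cancels exactly with the scalar in the denominator. What remains is $\Delta\mathbf{x}_i^\top \mathbf{A}^{-1} - \Delta\mathbf{x}_i^\top \mathbf{A}^{-1} = \mathbf{0}$, establishing the row-vector identity. Multiplying through by any $\mathbf{y}$ then concludes the argument.

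There is not really a hard step here — the result is essentially an instance of the fact that $\mathbf{P}_i$ is an oblique projector engineered to annihilate the direction $\Delta\mathbf{x}_i$ when acting on its left by $\Delta\mathbf{x}_i^\top \mathbf{A}^{-1}$. The only potential pitfall is keeping track of the transpose, which is resolved by the symmetry of $\mathbf{A}$. I would therefore keep the proof to a short, three-line calculation and remark that this orthogonality is what makes $\mathbf{M}_i = \mathbf{A}^{-1}\mathbf{P}_i$ orthogonal, in the $\mathbf{A}$-weighted sense, to $\Delta\mathbf{x}_i$, setting up the geometric picture that subsequent propositions in the appendix will build on.
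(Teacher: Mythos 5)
Your proposal is correct and follows essentially the same route as the paper's proof: transpose $\mathbf{A}^{-1}\Delta\mathbf{x}_i$, invoke the symmetry of $\mathbf{A}^{-1}$, expand $\mathbf{P}_i$, and cancel the scalar $\Delta\mathbf{x}_i^\top\mathbf{A}^{-1}\Delta\mathbf{x}_i$ against the denominator to obtain zero. The only cosmetic difference is that you phrase the computation as the vanishing of the row vector $(\mathbf{A}^{-1}\Delta\mathbf{x}_i)^\top\mathbf{P}_i$ before applying it to $\mathbf{y}$, whereas the paper carries $\mathbf{y}$ along throughout; the algebra is identical.
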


\begin{proof}
$$
(\mathbf{A}^{-1}\Delta \mathbf{x}_i)^\top \mathbf{P}_i\mathbf{y} = \Delta\mathbf{x}_i^\top (\mathbf{A}^{-1})^\top  \mathbf{P}_i\mathbf{y}
$$

Note that $\mathbf{A}^{-1}$ is symmetric, thus $(\mathbf{A}^{-1})^\top = \mathbf{A}^{-1}$.  Multiplying out this expression:

$$
\Delta\mathbf{x}_i^\top \mathbf{A}^{-1}  \mathbf{P}_i\mathbf{y} = \Delta\mathbf{x}_i^\top \mathbf{A}^{-1}  (\mathbf{I}_{n \times n} - \frac{\Delta\mathbf{x}_i\Delta\mathbf{x}_i^\top\mathbf{A}^{-1}}{\Delta \mathbf{x}_i^\top \mathbf{A}^{-1}\Delta \mathbf{x}_i} ) \mathbf{y}
$$

$$
=  \Delta\mathbf{x}_i^\top \mathbf{A}^{-1}\mathbf{y} - \Delta\mathbf{x}_i^\top \mathbf{A}^{-1}\Delta\mathbf{x}_i\frac{\Delta\mathbf{x}_i^\top\mathbf{A}^{-1}\mathbf{y}}{\Delta \mathbf{x}_i^\top \mathbf{A}^{-1}\Delta \mathbf{x}_i} 
$$

$$
=  \Delta\mathbf{x}_i^\top \mathbf{A}^{-1}\mathbf{y} - \Delta\mathbf{x}_i^\top \mathbf{A}^{-1}\mathbf{y} = 0
$$

\end{proof}

\begin{proposition}
\label{prop:orthogonal_m}
$\mathbf{M}_i\mathbf{y} \in \Delta\mathbf{x}_i^\perp \ \forall \mathbf{y} \in \mathbb{R}^n$, i.e. the matrix-vector product $\mathbf{M}_i\mathbf{y}$ is orthogonal to $\Delta \mathbf{x}_i$ for all $\mathbf{y} \in \mathbb{R}^n$.
\end{proposition}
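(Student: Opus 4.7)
The plan is to reduce Proposition \ref{prop:orthogonal_m} directly to Proposition \ref{prop:orthogonal_p}, which has already been established. To show that $\mathbf{M}_i\mathbf{y}$ is orthogonal to $\Delta\mathbf{x}_i$, I would compute the inner product $\Delta\mathbf{x}_i^\top \mathbf{M}_i \mathbf{y} = \Delta\mathbf{x}_i^\top \mathbf{A}^{-1}\mathbf{P}_i \mathbf{y}$ and show it vanishes for every $\mathbf{y} \in \mathbb{R}^n$.

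The key observation is that $\mathbf{A} = 2\Delta\mathbf{X}\Delta\mathbf{X}^\top$ is symmetric (being of the form $\mathbf{B}\mathbf{B}^\top$), so $\mathbf{A}^{-1}$ is also symmetric. This lets me rewrite the row vector on the left as $\Delta\mathbf{x}_i^\top \mathbf{A}^{-1} = (\mathbf{A}^{-1}\Delta\mathbf{x}_i)^\top$. Hence
\[
\Delta\mathbf{x}_i^\top \mathbf{M}_i\mathbf{y} \;=\; (\mathbf{A}^{-1}\Delta\mathbf{x}_i)^\top \mathbf{P}_i\mathbf{y},
\]
and Proposition \ref{prop:orthogonal_p} states exactly that $\mathbf{P}_i\mathbf{y}$ is orthogonal to $\mathbf{A}^{-1}\Delta\mathbf{x}_i$ for every $\mathbf{y}$. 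The expression therefore equals zero, proving the claim.

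There is no real obstacle here; Proposition \ref{prop:orthogonal_m} is essentially a corollary of Proposition \ref{prop:orthogonal_p} obtained by moving the factor $\mathbf{A}^{-1}$ across the inner product using symmetry. If one preferred a fully self-contained derivation, the same algebraic expansion used in the proof of Proposition \ref{prop:orthogonal_p} can be redone verbatim, substituting $\Delta\mathbf{x}_i^\top \mathbf{A}^{-1}$ for $(\mathbf{A}^{-1}\Delta\mathbf{x}_i)^\top$ at the start, and the two terms in the expansion of $\mathbf{P}_i$ cancel exactly as before. Either way, the argument is one short calculation.
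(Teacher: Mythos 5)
Your proof is correct. The paper proves this proposition by expanding $\mathbf{M}_i = \mathbf{A}^{-1} - \frac{\mathbf{A}^{-1}\Delta\mathbf{x}_i\Delta\mathbf{x}_i^\top\mathbf{A}^{-1}}{\Delta\mathbf{x}_i^\top\mathbf{A}^{-1}\Delta\mathbf{x}_i}$ and cancelling the two resulting terms directly, whereas you instead move $\mathbf{A}^{-1}$ across the inner product using its symmetry and invoke Proposition \ref{prop:orthogonal_p} as a black box. Both arguments rest on exactly the same cancellation — indeed, the quantity computed in the paper's proof of Proposition \ref{prop:orthogonal_p}, namely $\Delta\mathbf{x}_i^\top\mathbf{A}^{-1}\mathbf{P}_i\mathbf{y}$, is literally $\Delta\mathbf{x}_i^\top\mathbf{M}_i\mathbf{y}$, so your observation that Proposition \ref{prop:orthogonal_m} is a corollary of Proposition \ref{prop:orthogonal_p} is sharp and actually exposes a redundancy in the paper's presentation. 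Your route is slightly more economical (no repeated algebra); the paper's buys a self-contained proof at the cost of duplicating the computation. One minor caveat: your reduction requires $\mathbf{A}^{-1}$ to be symmetric, which holds because $\mathbf{A}=2\Delta\mathbf{X}\Delta\mathbf{X}^\top$ is symmetric and invertible (by the full-rank assumption on $\Delta\mathbf{X}$); you state this, so the argument is complete.
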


\begin{proof}
$$
\Delta \mathbf{x}_i^\top \mathbf{M}_i\mathbf{y} = \Delta \mathbf{x}_i^\top\mathbf{A}^{-1}\mathbf{y} - \Delta \mathbf{x}_i^\top\frac{\mathbf{A}^{-1}\Delta\mathbf{x}_i\Delta\mathbf{x}_i^\top\mathbf{A}^{-1}}{\Delta \mathbf{x}_i^\top \mathbf{A}^{-1}\Delta \mathbf{x}_i}\mathbf{y} 
$$

$$
= \Delta \mathbf{x}_i^\top\mathbf{A}^{-1}\mathbf{y} - \Delta \mathbf{x}_i^\top\mathbf{A}^{-1}\Delta\mathbf{x}_i\frac{\Delta\mathbf{x}_i^\top\mathbf{A}^{-1}\mathbf{y} }{\Delta \mathbf{x}_i^\top \mathbf{A}^{-1}\Delta \mathbf{x}_i}
$$

$$
= \Delta \mathbf{x}_i^\top\mathbf{A}^{-1}\mathbf{y} - \Delta \mathbf{x}_i^\top\mathbf{A}^{-1}\mathbf{y} = 0
$$
\end{proof}

\begin{proposition}
\label{prop:null_space_p}
$\Delta \mathbf{x}_i$ is in the null space of $\mathbf{P}_i$   
\end{proposition}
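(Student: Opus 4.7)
The plan is to directly compute the matrix-vector product $\mathbf{P}_i \Delta \mathbf{x}_i$ and show that it vanishes. By the definition of $\mathbf{P}_i$ given immediately above the proposition, the substitution yields
$$\mathbf{P}_i \Delta \mathbf{x}_i = \Delta \mathbf{x}_i - \frac{\Delta\mathbf{x}_i\,\Delta\mathbf{x}_i^\top \mathbf{A}^{-1} \Delta \mathbf{x}_i}{\Delta \mathbf{x}_i^\top \mathbf{A}^{-1}\Delta \mathbf{x}_i}.$$

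Next, I would observe that the scalar $\Delta\mathbf{x}_i^\top \mathbf{A}^{-1} \Delta \mathbf{x}_i$ appears identically in both the numerator and the denominator of the second term (noting that $\Delta\mathbf{x}_i^\top \mathbf{A}^{-1} \Delta \mathbf{x}_i$ is a scalar and can be pulled out past the leading $\Delta\mathbf{x}_i$), so the fraction simplifies to $\Delta \mathbf{x}_i$, giving $\mathbf{P}_i \Delta \mathbf{x}_i = \Delta \mathbf{x}_i - \Delta \mathbf{x}_i = \mathbf{0}$, which is the conclusion.

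There is essentially no obstacle here beyond being careful about the scalar-versus-vector structure of the expression: the key sub-step is recognizing that $\Delta\mathbf{x}_i^\top \mathbf{A}^{-1} \Delta \mathbf{x}_i \in \mathbb{R}$ (which requires $\mathbf{A}$, and hence $\mathbf{A}^{-1}$, to be well-defined, guaranteed since $\Delta \mathbf{X}$ is assumed full rank in \S\ref{sec:web_of_affine_spaces}), and furthermore that this scalar is nonzero so that $\mathbf{P}_i$ is itself well-defined. Since $\mathbf{A}^{-1} = (2 \Delta \mathbf{X} \Delta \mathbf{X}^\top)^{-1}$ is symmetric positive definite, the quadratic form $\Delta\mathbf{x}_i^\top \mathbf{A}^{-1} \Delta \mathbf{x}_i$ is strictly positive for the nonzero tangent $\Delta \mathbf{x}_i$, so the division is legitimate. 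After that safety check, the cancellation is a one-line computation and the proof is complete.
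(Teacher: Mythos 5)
Your proposal is correct and follows essentially the same route as the paper: substitute $\Delta \mathbf{x}_i$ into the definition of $\mathbf{P}_i$, pull the scalar $\Delta\mathbf{x}_i^\top \mathbf{A}^{-1} \Delta \mathbf{x}_i$ out so the fraction cancels, and conclude $\mathbf{P}_i \Delta \mathbf{x}_i = \Delta \mathbf{x}_i - \Delta \mathbf{x}_i = \mathbf{0}$. The only addition is your explicit check that the quadratic form in the denominator is strictly positive, which the paper leaves implicit but which is a harmless (and welcome) bit of extra care.
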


\begin{proof}
$$
\mathbf{P}_i \Delta \mathbf{x}_i = (\mathbf{I}_{n \times n} - \frac{\Delta\mathbf{x}_i\Delta\mathbf{x}_i^\top\mathbf{A}^{-1}}{\Delta \mathbf{x}_i^\top \mathbf{A}^{-1}\Delta \mathbf{x}_i}) \Delta \mathbf{x}_i
$$

$$
= \Delta \mathbf{x}_i - \Delta\mathbf{x}_i\frac{\Delta\mathbf{x}_i^\top\mathbf{A}^{-1}\Delta \mathbf{x}_i}{\Delta \mathbf{x}_i^\top \mathbf{A}^{-1}\Delta \mathbf{x}_i}
$$

$$
= \Delta \mathbf{x}_i - \Delta\mathbf{x}_i = 0
$$
\end{proof}

\begin{proposition}
\label{prop:null_space_m}
$\Delta \mathbf{x}_i$ is in the null space of $\mathbf{M}_i$
\end{proposition}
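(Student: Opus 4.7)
The plan is to leverage Proposition \ref{prop:null_space_p} directly, since it has already done the work of showing $\mathbf{P}_i \Delta \mathbf{x}_i = \mathbf{0}$. Recall from the setup that $\mathbf{M}_i$ is defined as $\mathbf{A}^{-1} \mathbf{P}_i$, so the claim $\Delta \mathbf{x}_i \in \mathrm{null}(\mathbf{M}_i)$ reduces to checking $\mathbf{A}^{-1} \mathbf{P}_i \Delta \mathbf{x}_i = \mathbf{0}$.

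The key step is a single line of algebra: apply the definition $\mathbf{M}_i = \mathbf{A}^{-1} \mathbf{P}_i$, then invoke Proposition \ref{prop:null_space_p} to replace $\mathbf{P}_i \Delta \mathbf{x}_i$ with $\mathbf{0}$, and finally note that $\mathbf{A}^{-1} \mathbf{0} = \mathbf{0}$. No assumption about $\mathbf{A}$ beyond its invertibility (which is already implicit in the definition $\mathbf{A}^{-1}$) is needed, since we are multiplying the zero vector.

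There is essentially no obstacle here, as this proposition is a corollary of the preceding one. The only subtlety is a conceptual one: Proposition \ref{prop:null_space_m} is not redundant despite being an immediate consequence of Proposition \ref{prop:null_space_p}, because it records a different geometric fact (namely, membership in the null space of the composite operator $\mathbf{M}_i$, which is what actually acts on $\hat{\Delta \mathbf{F}}^\top$ in Term 1 of Equation \ref{eq:solution_terms}). I would therefore keep the proof to a two-line display showing $\mathbf{M}_i \Delta \mathbf{x}_i = \mathbf{A}^{-1} \mathbf{P}_i \Delta \mathbf{x}_i = \mathbf{A}^{-1} \mathbf{0} = \mathbf{0}$ and cite Proposition \ref{prop:null_space_p} at the step where $\mathbf{P}_i \Delta \mathbf{x}_i$ is replaced by $\mathbf{0}$.
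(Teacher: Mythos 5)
Your proof is correct, but it takes a slightly different route from the paper's. The paper proves Proposition \ref{prop:null_space_m} by a standalone direct expansion: it writes out $\mathbf{M}_i \Delta \mathbf{x}_i = \mathbf{A}^{-1}\Delta \mathbf{x}_i - \frac{\mathbf{A}^{-1}\Delta\mathbf{x}_i\Delta\mathbf{x}_i^\top\mathbf{A}^{-1}}{\Delta \mathbf{x}_i^\top \mathbf{A}^{-1}\Delta \mathbf{x}_i}\Delta \mathbf{x}_i$ and observes that the scalar $\frac{\Delta\mathbf{x}_i^\top\mathbf{A}^{-1}\Delta \mathbf{x}_i}{\Delta \mathbf{x}_i^\top \mathbf{A}^{-1}\Delta \mathbf{x}_i}$ collapses to $1$, yielding $\mathbf{A}^{-1}\Delta\mathbf{x}_i - \mathbf{A}^{-1}\Delta\mathbf{x}_i = \mathbf{0}$ — it never invokes Proposition \ref{prop:null_space_p}, even though the cancellation is algebraically identical to the one performed there. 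You instead exploit the factorization $\mathbf{M}_i = \mathbf{A}^{-1}\mathbf{P}_i$ given in the definition and reduce the claim to the already-proved fact $\mathbf{P}_i\Delta\mathbf{x}_i = \mathbf{0}$, so the result becomes a one-line corollary: $\mathbf{M}_i\Delta\mathbf{x}_i = \mathbf{A}^{-1}(\mathbf{P}_i\Delta\mathbf{x}_i) = \mathbf{A}^{-1}\mathbf{0} = \mathbf{0}$. Your version is shorter, avoids duplicating the cancellation, and makes the logical dependence on Proposition \ref{prop:null_space_p} explicit; the paper's version is self-contained and does not require the reader to have internalized the factorized form of $\mathbf{M}_i$. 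Both are valid, and your observation that invertibility of $\mathbf{A}$ is not even needed at this step (only that $\mathbf{A}^{-1}$ annihilates the zero vector) is accurate.
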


\begin{proof}
$$
\mathbf{M}_i\Delta \mathbf{x}_i = \mathbf{A}^{-1}\Delta \mathbf{x}_i - \frac{\mathbf{A}^{-1}\Delta\mathbf{x}_i\Delta\mathbf{x}_i^\top\mathbf{A}^{-1}}{\Delta \mathbf{x}_i^\top \mathbf{A}^{-1}\Delta \mathbf{x}_i} \Delta \mathbf{x}_i 
$$

$$
= \mathbf{A}^{-1}\Delta \mathbf{x}_i - \mathbf{A}^{-1}\Delta\mathbf{x}_i\frac{\Delta\mathbf{x}_i^\top\mathbf{A}^{-1}\Delta \mathbf{x}_i}{\Delta \mathbf{x}_i^\top \mathbf{A}^{-1}\Delta \mathbf{x}_i}  
$$

$$
= \mathbf{A}^{-1}\Delta \mathbf{x}_i - \mathbf{A}^{-1}\Delta\mathbf{x}_i = \mathbf{0}
$$
\end{proof}

\begin{proposition}
\label{prop:null_space_span_p}
$\Delta \mathbf{x}_i$ spans the whole null space of $\mathbf{P}_i$
\end{proposition}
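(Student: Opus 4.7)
The plan is to prove that the null space of $\mathbf{P}_i$ is exactly one-dimensional; combined with Proposition \ref{prop:null_space_p}, which already places the nonzero vector $\Delta \mathbf{x}_i$ inside that null space, this will force $\Delta \mathbf{x}_i$ to span it. I see two natural routes, and I would choose whichever reads most cleanly in context.

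The first approach is a direct computation. Suppose $\mathbf{y} \in \mathbb{R}^n$ satisfies $\mathbf{P}_i \mathbf{y} = \mathbf{0}$. Expanding the definition of $\mathbf{P}_i$, this rearranges to
\begin{equation*}
\mathbf{y} \;=\; \frac{\Delta \mathbf{x}_i^\top \mathbf{A}^{-1} \mathbf{y}}{\Delta \mathbf{x}_i^\top \mathbf{A}^{-1} \Delta \mathbf{x}_i}\,\Delta \mathbf{x}_i,
\end{equation*}
which exhibits $\mathbf{y}$ explicitly as a scalar multiple of $\Delta \mathbf{x}_i$. This immediately yields that the null space is contained in $\mathrm{span}(\Delta \mathbf{x}_i)$, and combined with Proposition \ref{prop:null_space_p} completes the proof.

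The second approach is more structural and may be preferable for exposition: recognize $\mathbf{P}_i = \mathbf{I}_{n\times n} - \mathbf{u}\mathbf{v}^\top$ with $\mathbf{u} = \Delta \mathbf{x}_i$ and $\mathbf{v}^\top = (\Delta \mathbf{x}_i^\top \mathbf{A}^{-1} \Delta \mathbf{x}_i)^{-1} \Delta \mathbf{x}_i^\top \mathbf{A}^{-1}$, so that $\mathbf{v}^\top \mathbf{u} = 1$. Then I would verify idempotency $\mathbf{P}_i^2 = \mathbf{P}_i$ by a one-line expansion, which makes $\mathbf{P}_i$ an (oblique) projector. For projectors the rank equals the trace, and $\mathrm{tr}(\mathbf{P}_i) = n - \mathrm{tr}(\mathbf{u}\mathbf{v}^\top) = n - \mathbf{v}^\top \mathbf{u} = n - 1$. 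By rank-nullity the nullity is $1$, and Proposition \ref{prop:null_space_p} again supplies the generator.

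I do not anticipate a real obstacle. The only subtlety worth noting up front is that the scalar $\Delta \mathbf{x}_i^\top \mathbf{A}^{-1} \Delta \mathbf{x}_i$ appearing in the denominator is nonzero: this follows because $\Delta \mathbf{X}$ is assumed full rank (as set up in Section \ref{sec:web_of_affine_spaces}), making $\mathbf{A} = 2\Delta \mathbf{X}\Delta \mathbf{X}^\top$ and hence $\mathbf{A}^{-1}$ positive definite, so the quadratic form on the nonzero vector $\Delta \mathbf{x}_i$ is strictly positive. I would state this well-definedness remark once at the start of the argument so it also retroactively legitimizes the formulas used in Propositions \ref{prop:orthogonal_p}--\ref{prop:null_space_m}.
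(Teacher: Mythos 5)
Your first route is essentially the paper's own proof: the paper supposes $\mathbf{z}$ is a null vector that is not a multiple of $\Delta\mathbf{x}_i$, expands $\mathbf{P}_i\mathbf{z}=\mathbf{0}$ to get $\mathbf{z} = \lambda\,\Delta\mathbf{x}_i$ with $\lambda = \frac{\Delta\mathbf{x}_i^\top\mathbf{A}^{-1}\mathbf{z}}{\Delta\mathbf{x}_i^\top\mathbf{A}^{-1}\Delta\mathbf{x}_i}$, and concludes by contradiction; your direct-containment phrasing of the same computation is, if anything, cleaner. Your second route (writing $\mathbf{P}_i = \mathbf{I} - \mathbf{u}\mathbf{v}^\top$ with $\mathbf{v}^\top\mathbf{u}=1$, checking idempotency, and reading off $\operatorname{rank} = \operatorname{tr} = n-1$) is genuinely different from anything in the paper and is correct; it buys you Proposition \ref{prop:rank_p} for free in the same breath, whereas the paper instead derives the rank afterward from the nullity via rank--nullity. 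Your observation that $\Delta\mathbf{x}_i^\top\mathbf{A}^{-1}\Delta\mathbf{x}_i > 0$ by positive definiteness of $\mathbf{A}^{-1}$ is a worthwhile well-definedness remark that the paper leaves implicit.
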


\begin{proof}
We already know $\Delta \mathbf{x}_i$ is in the null space of $\mathbf{P}_i$ from Proposition \ref{prop:null_space_p}.  Now, suppose $\mathbf{z}$ is a vector in the null space of $\mathbf{P}_i$ that is not a scaling of $\Delta \mathbf{x}_i$, i.e., $\mathbf{z} \neq \lambda \Delta \mathbf{x}_i$.  In this case, we would have the following:

$$
\mathbf{P}_i\mathbf{z} = \mathbf{z} - \frac{\Delta\mathbf{x}_i\Delta\mathbf{x}_i^\top\mathbf{A}^{-1}}{\Delta \mathbf{x}_i^\top \mathbf{A}^{-1}\Delta \mathbf{x}_i}\mathbf{z} = \mathbf{0}
$$

$$
\mathbf{z} = \Delta\mathbf{x}_i\frac{\Delta\mathbf{x}_i^\top\mathbf{A}^{-1}\mathbf{z}}{\Delta \mathbf{x}_i^\top \mathbf{A}^{-1}\Delta \mathbf{x}_i}
$$

$$
\mathbf{z} = \lambda \Delta\mathbf{x}_i
$$

\noindent where $\lambda = \frac{\Delta\mathbf{x}_i^\top\mathbf{A}^{-1}\mathbf{z}}{\Delta \mathbf{x}_i^\top \mathbf{A}^{-1}\Delta \mathbf{x}_i} \in \mathbb{R}$.  This contradicts our assumption that $\mathbf{z}$ is not a scaling of $\Delta \mathbf{x}_i$, meaning $\Delta \mathbf{x}_i$ must span the whole null space of $\mathbf{P}_i$.

\end{proof}

\begin{proposition}
\label{prop:null_space_span_m}
$\Delta \mathbf{x}_i$ spans the whole null space of $\mathbf{M}_i$
\end{proposition}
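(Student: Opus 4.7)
The plan is to reduce this statement to the already-established Proposition \ref{prop:null_space_span_p} by exploiting the factorization $\mathbf{M}_i = \mathbf{A}^{-1}\mathbf{P}_i$ together with the invertibility of $\mathbf{A}^{-1}$. The core observation is that multiplying on the left by an invertible matrix cannot change the null space, so $\mathrm{null}(\mathbf{M}_i) = \mathrm{null}(\mathbf{P}_i)$, and then the desired spanning statement is inherited directly.

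First I would explicitly verify that $\mathbf{A} = 2\Delta\mathbf{X}\Delta\mathbf{X}^\top$ is invertible under the standing assumption that $\Delta\mathbf{X}$ is a full-rank $n \times r$ matrix with $r \geq n$ (this is the hypothesis under which Equation \ref{eq:solution_d} was derived in the first place). Under that hypothesis $\Delta\mathbf{X}\Delta\mathbf{X}^\top$ is a symmetric positive definite $n \times n$ matrix, hence $\mathbf{A}^{-1}$ exists and is itself symmetric positive definite. Next I would show the chain of equivalences $\mathbf{M}_i \mathbf{z} = \mathbf{0} \iff \mathbf{A}^{-1}\mathbf{P}_i \mathbf{z} = \mathbf{0} \iff \mathbf{P}_i \mathbf{z} = \mathbf{0}$, where the last step uses that $\mathbf{A}^{-1}$ has trivial kernel. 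This establishes $\mathrm{null}(\mathbf{M}_i) = \mathrm{null}(\mathbf{P}_i)$ as sets.

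Finally I would invoke Proposition \ref{prop:null_space_span_p}, which states that $\Delta\mathbf{x}_i$ spans $\mathrm{null}(\mathbf{P}_i)$, to conclude that $\Delta\mathbf{x}_i$ spans $\mathrm{null}(\mathbf{M}_i)$ as well. As a sanity check, this is consistent with Proposition \ref{prop:null_space_m}, which already puts $\Delta\mathbf{x}_i$ inside $\mathrm{null}(\mathbf{M}_i)$; the new content here is just the reverse inclusion.

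There is no real obstacle in this argument beyond making the invertibility of $\mathbf{A}$ explicit. If one wanted a fully self-contained proof that does not cite Proposition \ref{prop:null_space_span_p}, the alternative would be to mimic the contradiction argument used there: assume $\mathbf{z} \in \mathrm{null}(\mathbf{M}_i)$ is not a scalar multiple of $\Delta\mathbf{x}_i$, left-multiply by $\mathbf{A}$ to reduce to $\mathbf{P}_i \mathbf{z} = \mathbf{0}$, then solve for $\mathbf{z}$ and observe that it must equal $\lambda \Delta\mathbf{x}_i$ with $\lambda = (\Delta\mathbf{x}_i^\top \mathbf{A}^{-1}\mathbf{z})/(\Delta\mathbf{x}_i^\top \mathbf{A}^{-1}\Delta\mathbf{x}_i)$, contradicting the assumption. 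Either route is short; the cleaner presentation is the two-line reduction via the invertibility of $\mathbf{A}^{-1}$.
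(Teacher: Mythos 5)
Your proposal is correct and follows essentially the same route as the paper: the paper's own proof also factors out $\mathbf{A}^{-1}$, invokes its invertibility to conclude that the remaining factor (which is exactly $\mathbf{P}_i\mathbf{z}$) must vanish, and then derives $\mathbf{z} = \lambda \Delta\mathbf{x}_i$ by the same algebra as in Proposition~\ref{prop:null_space_span_p}. The only difference is presentational---the paper re-expands that algebra inline rather than citing Proposition~\ref{prop:null_space_span_p} directly, which is precisely the ``self-contained'' alternative you describe at the end.
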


\begin{proof}
We already know $\Delta \mathbf{x}_i$ is in the null space of $\mathbf{M}_i$ from Proposition \ref{prop:null_space_m}.  Now, suppose $\mathbf{z}$ is a vector in the null space of $\mathbf{M}_i$ that is not a scaling of $\Delta \mathbf{x}_i$, i.e., $\mathbf{z} \neq \lambda \Delta \mathbf{x}_i$.  In this case, we would have the following:

$$
\mathbf{M}_i\mathbf{z} = \mathbf{A}^{-1}\mathbf{z} - \frac{\mathbf{A}^{-1}\Delta\mathbf{x}_i\Delta\mathbf{x}_i^\top\mathbf{A}^{-1}}{\Delta \mathbf{x}_i^\top \mathbf{A}^{-1}\Delta \mathbf{x}_i}\mathbf{z} = \mathbf{0}
$$

$$
\mathbf{A}^{-1}(\mathbf{z} - \frac{\Delta\mathbf{x}_i\Delta\mathbf{x}_i^\top\mathbf{A}^{-1}\mathbf{z}}{\Delta \mathbf{x}_i^\top \mathbf{A}^{-1}\Delta \mathbf{x}_i}) = \mathbf{0}.
$$

\noindent We know that $\mathbf{A}^{-1}$ must be invertible by definition, implying that the term in parentheses must equal zero:

$$
\mathbf{z} - \frac{\Delta\mathbf{x}_i\Delta\mathbf{x}_i^\top\mathbf{A}^{-1}\mathbf{z}}{\Delta \mathbf{x}_i^\top \mathbf{A}^{-1}\Delta \mathbf{x}_i} = \mathbf{0}
$$

$$
\mathbf{z} = \Delta\mathbf{x}_i\frac{\Delta\mathbf{x}_i^\top\mathbf{A}^{-1}\mathbf{z}}{\Delta \mathbf{x}_i^\top \mathbf{A}^{-1}\Delta \mathbf{x}_i}
$$

$$
\mathbf{z} = \lambda \Delta\mathbf{x}_i
$$

\noindent where $\lambda = \frac{\Delta\mathbf{x}_i^\top\mathbf{A}^{-1}\mathbf{z}}{\Delta \mathbf{x}_i^\top \mathbf{A}^{-1}\Delta \mathbf{x}_i} \in \mathbb{R}$.  This statement contradicts our assumption that $\mathbf{z}$ is not a scaling of $\Delta \mathbf{x}_i$, meaning $\Delta \mathbf{x}_i$ must span the whole null space of $\mathbf{M}_i$.

\end{proof}

\begin{proposition}
\label{prop:rank_p}
$rank(\mathbf{P}_i) = n - 1$
\end{proposition}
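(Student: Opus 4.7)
The plan is to invoke the rank--nullity theorem, which states that for any linear map $\mathbf{P}_i : \mathbb{R}^n \to \mathbb{R}^n$, we have $\mathrm{rank}(\mathbf{P}_i) + \mathrm{nullity}(\mathbf{P}_i) = n$. So the entire task reduces to pinning down the dimension of the null space of $\mathbf{P}_i$.

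The key observation is that this null space dimension has already been fully characterized in the previous propositions. Specifically, Proposition \ref{prop:null_space_p} established that $\Delta \mathbf{x}_i$ lies in the null space of $\mathbf{P}_i$, and Proposition \ref{prop:null_space_span_p} strengthened this by showing that $\Delta \mathbf{x}_i$ spans the entire null space, i.e., any null vector must be a scalar multiple of $\Delta \mathbf{x}_i$. Assuming $\Delta \mathbf{x}_i \neq \mathbf{0}$ (which is implicit, since otherwise the denominator $\Delta \mathbf{x}_i^\top \mathbf{A}^{-1} \Delta \mathbf{x}_i$ in the definition of $\mathbf{P}_i$ would vanish and $\mathbf{P}_i$ would be undefined), we get $\mathrm{nullity}(\mathbf{P}_i) = 1$.

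First I would cite Proposition \ref{prop:null_space_span_p} to conclude that $\mathrm{null}(\mathbf{P}_i) = \mathrm{span}\{\Delta \mathbf{x}_i\}$, a one-dimensional subspace of $\mathbb{R}^n$. Then I would apply rank--nullity to $\mathbf{P}_i$ as a linear operator on $\mathbb{R}^n$, yielding $\mathrm{rank}(\mathbf{P}_i) = n - \mathrm{nullity}(\mathbf{P}_i) = n - 1$.

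There is no real obstacle here, as all the heavy lifting has been done by Propositions \ref{prop:null_space_p} and \ref{prop:null_space_span_p}; the only item worth a brief remark is the nondegeneracy assumption $\Delta \mathbf{x}_i \neq \mathbf{0}$, which is needed both for $\mathbf{P}_i$ to be well defined and for $\mathrm{span}\{\Delta \mathbf{x}_i\}$ to actually be one-dimensional rather than trivial. This proof should be just a line or two.
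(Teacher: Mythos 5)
Your proposal is correct and follows essentially the same route as the paper: both apply the rank--nullity theorem after using Proposition \ref{prop:null_space_span_p} to conclude that the null space of $\mathbf{P}_i$ is exactly $\mathrm{span}\{\Delta \mathbf{x}_i\}$ and hence one-dimensional. Your added remark that $\Delta \mathbf{x}_i \neq \mathbf{0}$ is needed for $\mathbf{P}_i$ to be well defined is a reasonable touch the paper leaves implicit, but it does not change the argument.
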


\begin{proof}
The rank of $\mathbf{P}_i$ must satisfy $n = rank(\mathbf{M}_i) + nullity(\mathbf{P}_i)$.  We know the nullity of $\mathbf{P}_i$ is $1$ from Proposition \ref{prop:null_space_span_p}, meaning we have $n = rank(\mathbf{P}_i) + 1 \Longrightarrow rank(\mathbf{P}_i) = n - 1$.
\end{proof}

\begin{proposition}
\label{prop:rank_m}
$rank(\mathbf{M}_i) = n - 1$
\end{proposition}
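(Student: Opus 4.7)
The plan is to follow the exact same template used for Proposition \ref{prop:rank_p}, namely, to apply the rank-nullity theorem. Since $\mathbf{M}_i$ is a linear map from $\mathbb{R}^n$ to $\mathbb{R}^n$, rank-nullity gives $\text{rank}(\mathbf{M}_i) + \text{nullity}(\mathbf{M}_i) = n$. The entire content of the proof is therefore to pin down the nullity.

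First I would invoke Proposition \ref{prop:null_space_span_m}, which already establishes that $\Delta \mathbf{x}_i$ spans the whole null space of $\mathbf{M}_i$. Since $\Delta \mathbf{x}_i$ is a nonzero tangent vector (it is a column of a full-rank tangent matrix $\Delta \mathbf{X}$, as stipulated in the technical setup), the span of the single vector $\Delta \mathbf{x}_i$ is one-dimensional. Thus $\text{nullity}(\mathbf{M}_i) = 1$, and rank-nullity immediately yields $\text{rank}(\mathbf{M}_i) = n - 1$.

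As a sanity cross-check I would note an alternative one-line derivation: because $\mathbf{M}_i = \mathbf{A}^{-1} \mathbf{P}_i$ with $\mathbf{A} = 2\Delta\mathbf{X}\Delta\mathbf{X}^\top$ invertible (full rank when $\Delta \mathbf{X}$ is full rank with $r \geq n$), left-multiplication by $\mathbf{A}^{-1}$ preserves rank, so $\text{rank}(\mathbf{M}_i) = \text{rank}(\mathbf{P}_i)$, which equals $n-1$ by Proposition \ref{prop:rank_p}. Either route suffices.

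There is no real obstacle here. The proposition is essentially a corollary of material already proved earlier in the appendix; the only thing to be careful about is invoking nonzeroness of $\Delta \mathbf{x}_i$ to conclude that its span is one-dimensional rather than zero-dimensional, and invertibility of $\mathbf{A}$ if one chooses the alternate route. Both facts are immediate from the standing assumption that $\Delta \mathbf{X}$ is full rank.
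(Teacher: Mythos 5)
Your proof is correct and follows essentially the same route as the paper: rank--nullity combined with Proposition~\ref{prop:null_space_span_m} to conclude the nullity is $1$. The extra care about $\Delta\mathbf{x}_i$ being nonzero and the alternative argument via $\mathbf{M}_i = \mathbf{A}^{-1}\mathbf{P}_i$ are fine additions but not substantively different from the paper's proof.
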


\begin{proof}
The rank of $\mathbf{M}_i$ must satisfy $n = rank(\mathbf{M}_i) + nullity(\mathbf{M}_i)$.  We know the nullity of $\mathbf{M}_i$ is $1$ from Proposition \ref{prop:null_space_span_m}, meaning we have $n = rank(\mathbf{M}_i) + 1 \Longrightarrow rank(\mathbf{M}_i) = n - 1$.
\end{proof}

\begin{figure*}[t!]
\centering
\includegraphics[width=\textwidth]{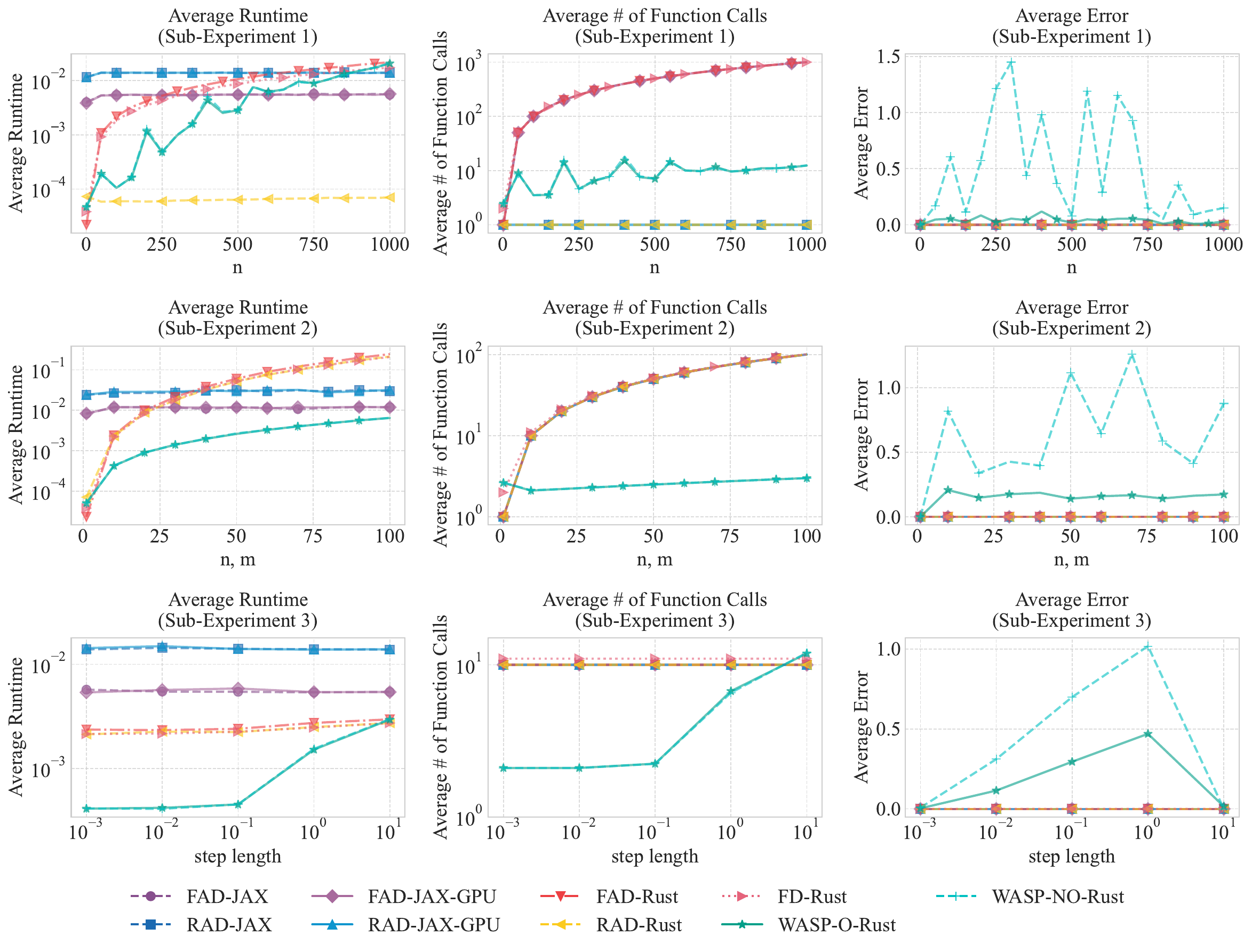}
\caption{Results for Evaluation 1--supplemental conditions: Sub-experiment 1 (top) Sub-experiment 2 (middle) and Sub-experiment 3 (bottom) }
\label{fig:se1_supplement}
\end{figure*}

\vspace{10pt}

In summary, $\mathbf{P}_i$ projects any $n$-vector onto the subspace of $\mathbb{R}^n$ that is orthogonal to $\mathbf{A}^{-1} \Delta \mathbf{x}_i$ (Propositions \ref{prop:orthogonal_p}, \ref{prop:null_space_span_p}, and  \ref{prop:rank_p}), and $\mathbf{M}_i$ projects any $n$-vector onto the subspace of $\mathbb{R}^n$ that is orthogonal to $\Delta \mathbf{x}_i$ (Propositions \ref{prop:orthogonal_m}, \ref{prop:null_space_span_m}, and  \ref{prop:rank_m}).  Concretely, let us examine the effect of mapping a vector $\mathbf{y}$ using the matrix $\mathbf{M}_i$:

$$
\mathbf{M}_i\mathbf{y} = \mathbf{A}^{-1}\mathbf{P}_i\mathbf{y}.
$$

\noindent Here, we see that $\mathbf{y}$ is first transformed by $\mathbf{P}_i$ into an intermediate space that is orthogonal to $\mathbf{A}^{-1}\Delta \mathbf{x}_i$.  Then, this intermediate vector is further transformed by $\mathbf{A}^{-1}$ into a space that is orthogonal to $\Delta \mathbf{x}_i$.  



Note that the $\mathbf{M}_i$ matrix functions similarly to the $\mathbf{Z}_{\Delta \mathbf{x}^\top}$ null space basis matrix in Equation \ref{eq:affine_solution_space}.  The columns of both of these matrices span a space that is orthogonal to $\Delta \mathbf{x}_i$.  Also, the matrix in the second part of Term 1, $2\Delta\mathbf{X} \hat{\Delta\mathbf{F}}^\top$, serves as an analogue to the $\mathbf{Y}$ matrix in Equation \ref{eq:affine_solution_space}. The columns of $2\Delta\mathbf{X}  \hat{\Delta\mathbf{F}}^\top$ are projected by $\mathbf{M}_i$ onto the subspace of $\mathbb{R}^n$ that is orthogonal to $\Delta \mathbf{x}_i$, identifying the region within this infinite space where an approximation of the derivative—excluding components in the $\Delta \mathbf{x}_i$ direction—resides.


\vspace{15pt}

\subsubsection{Geometric Interpretation of Term 2}
\label{sec:geometric_interpretation_of_term_2}
 
Rewriting Term 2 here for convenience:

$$
s_i^{-1}\mathbf{A}^{-1}\Delta \mathbf{x}_i \Delta\mathbf{f}_i^\top.
$$

\noindent We see that this term is a rank-1 matrix induced by an outer product.  Note that this matrix is reminiscent of the minimum norm solution $(\Delta \mathbf{x}^\top)^\dagger \Delta \mathbf{f}^\top$ matrix in Equation \ref{eq:affine_solution_space}.  This comparison with Equation \ref{eq:affine_solution_space} seemingly integrates nicely with our analysis regarding Term 1 above.  Since Term 1 provides an approximation of the derivative with all contribution from the direction $\Delta \mathbf{x}_i$ removed (analogous to $\mathbf{Z}_{\Delta \mathbf{x}^\top}\mathbf{Y}$ in Equation \ref{eq:affine_solution_space}), our expectation would be that Term 2 would restore the part of the solution only along this missing $\Delta \mathbf{x}_i$ direction (analogous to $(\Delta \mathbf{x}^\top)^\dagger \Delta \mathbf{f}^\top$ in Equation \ref{eq:affine_solution_space}).

Although the hypothetical analysis above suggests a strong geometric symmetry between Term 2 and $(\Delta \mathbf{x}^\top)^\dagger \Delta \mathbf{f}^\top$, there is a subtle complication that prevents a perfect correspondence.  While $(\Delta \mathbf{x}^\top)^\dagger \Delta \mathbf{f}^\top$ scales in the $\Delta \mathbf{x}$ direction, contributing only in this direction missing from $\mathbf{Z}_{\Delta \mathbf{x}^\top}\mathbf{Y}$, the Term 2 expression $s_i^{-1}\mathbf{A}^{-1}\Delta \mathbf{x}_i \Delta\mathbf{f}_i^\top$ scales in the $\mathbf{A}^{-1}\Delta \mathbf{x}_i$ direction—not just $\Delta \mathbf{x}_i$. As a result, this term may reintroduce parts of the solution outside of the $\Delta \mathbf{x}_i$ direction, potentially interfering with the partial solution already provided by Term 1.  We assess where this misalignment comes from and how to address this issue below. 



\subsection{Tangent Matrix Structure}
\label{sec:tangent_matrix_structure}

A key element in our approach is the matrix of tangent vectors, $\Delta \mathbf{X}$. While, in theory, $\Delta \mathbf{X}$ can be any full-rank $n \times r$ matrix—Equation \ref{eq:solution_d} can indeed yield an optimal solution with respect to any chosen $\Delta \mathbf{X}$—in this section, we explore whether enforcing a particular structure on $\Delta \mathbf{X}$ might enhance the results overall.


Our goal is to address the potential misalignment between Term 1 and Term 2 presented in \S\ref{sec:geometric_interpretation_of_term_2}.  In other words, we want Term 2 to only reintroduce parts of the solution in the $\Delta \mathbf{x}_i$ direction without interfering with the partial solution already provided by Term 1.  Note that to achieve this result, all $\Delta \mathbf{x}_i$ vectors cannot be rotated at all by their corresponding $\mathbf{A}^{-1}$ matrix.  Instead, the $\Delta \mathbf{x}_i$ must only be scaled by $\mathbf{A}^{-1}$, meaning that it must be an Eignevector of this matrix, i.e.,

$$
\mathbf{A}^{-1}\Delta\mathbf{x}_i = \lambda \Delta \mathbf{x}_i.
$$

\noindent for some scalar value $\lambda$.  Through the proposition below, we demonstrate that this result is always achieved when $\Delta \mathbf{X}$ is orthonormal.

\begin{proposition}
    When $\Delta \mathbf{X}$ is orthonormal, $\Delta \mathbf{x}_i$ is always an Eigenvector of $\mathbf{A}^{-1}$ for all $i$, meaning Term 2, $s_i^{-1}\mathbf{A}^{-1}\Delta \mathbf{x}_i\Delta \mathbf{f}_i^\top$, is always orthogonal to Term 1, $\mathbf{A}^{-1}(\mathbf{I}_{n \times n} - s_i^{-1}\Delta\mathbf{x}_i \Delta \mathbf{x}_i^\top \mathbf{A}^{-1})$
\end{proposition}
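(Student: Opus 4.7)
The plan is to exploit the fact that orthonormality of $\Delta \mathbf{X}$ collapses $\mathbf{A}$ to a scalar multiple of the identity, making every tangent trivially an eigenvector. First I would observe that for $\mathbf{A} = 2\Delta \mathbf{X}\Delta \mathbf{X}^\top$ to be invertible (as required implicitly by the solution formula), $\Delta \mathbf{X}$ must have rank $n$; combined with the columns being mutually orthonormal, this forces $r = n$, so $\Delta \mathbf{X}$ is a square orthogonal matrix satisfying $\Delta \mathbf{X}\Delta \mathbf{X}^\top = \mathbf{I}_{n \times n}$. Consequently $\mathbf{A} = 2\mathbf{I}_{n \times n}$ and $\mathbf{A}^{-1} = \tfrac{1}{2}\mathbf{I}_{n \times n}$.

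From here, the eigenvector claim is immediate: $\mathbf{A}^{-1}\Delta \mathbf{x}_i = \tfrac{1}{2}\Delta \mathbf{x}_i$ for every $i$, so each $\Delta \mathbf{x}_i$ is an eigenvector of $\mathbf{A}^{-1}$ with eigenvalue $\tfrac{1}{2}$. It also follows that $s_i = \Delta \mathbf{x}_i^\top \mathbf{A}^{-1} \Delta \mathbf{x}_i = \tfrac{1}{2}\|\Delta \mathbf{x}_i\|^2 = \tfrac{1}{2}$, which ensures that the scalar $s_i^{-1}$ appearing in Term 2 is well-defined and finite.

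For the orthogonality statement, I would argue column-by-column. The columns of Term 2 are $s_i^{-1}\mathbf{A}^{-1}\Delta \mathbf{x}_i$ scaled by the entries of $\Delta \mathbf{f}_i^\top$; since $\mathbf{A}^{-1}\Delta \mathbf{x}_i$ is parallel to $\Delta \mathbf{x}_i$ in the orthonormal case, every column of Term 2 lies in $\mathrm{span}(\Delta \mathbf{x}_i)$. Meanwhile, Term 1 has the form $\mathbf{M}_i \mathbf{y}$ applied to the columns $\mathbf{y}$ of $2\Delta \mathbf{X}\hat{\Delta \mathbf{F}}^\top$, and Proposition \ref{prop:orthogonal_m} already established that $\mathbf{M}_i \mathbf{y} \in \Delta \mathbf{x}_i^{\perp}$ for any $\mathbf{y} \in \mathbb{R}^n$. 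Thus every column of Term 1 lies in $\Delta \mathbf{x}_i^{\perp}$ while every column of Term 2 lies in $\mathrm{span}(\Delta \mathbf{x}_i)$, giving column-wise orthogonality (equivalently, $\mathrm{Term\ 1}^\top \cdot \mathrm{Term\ 2} = \mathbf{0}$).

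Honestly there is no real obstacle here; the whole proposition is essentially a one-line observation once one notices that orthonormality forces $\mathbf{A}^{-1}$ to be a scalar multiple of the identity. The only subtlety worth flagging in the write-up is the implicit requirement $r = n$, which prevents any degenerate non-square interpretation of ``orthonormal,'' and a quick appeal back to Proposition \ref{prop:orthogonal_m} to handle the Term 1 side cleanly rather than re-deriving the orthogonality from scratch. $\Box$
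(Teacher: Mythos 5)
Your proof is correct and follows essentially the same route as the paper: orthonormality forces $\mathbf{A}^{-1} = \tfrac{1}{2}\mathbf{I}_{n \times n}$, making each $\Delta \mathbf{x}_i$ trivially an eigenvector, and the orthogonality then follows because the columns of Term 2 lie in $\mathrm{span}(\Delta \mathbf{x}_i)$ while those of Term 1 lie in $\Delta \mathbf{x}_i^{\perp}$ (via Proposition \ref{prop:orthogonal_m}). Your arithmetic is in fact slightly more careful than the paper's, which writes $s_i^{-1} = \tfrac{1}{2}$ where it should be $s_i = \tfrac{1}{2}$ (hence $s_i^{-1} = 2$); your value is the one consistent with the constraint $\Delta \mathbf{x}_i^\top \mathbf{D}^{*\top} = \Delta \mathbf{f}_i^\top$.
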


\begin{proof}
To begin, let us revisit Term 1, as discussed in \S\ref{sec:geometric_interpretation_of_solution}:

$$
\mathbf{A}^{-1}(\mathbf{I}_{n \times n} - s_i^{-1}\Delta\mathbf{x}_i \Delta \mathbf{x}_i^\top \mathbf{A}^{-1}).
$$

\noindent Here, $\mathbf{A} = 2\Delta\mathbf{X}\Delta\mathbf{X}^\top$ and $s_i = \Delta \mathbf{x}_i^\top \mathbf{A}^{-1}\Delta\mathbf{x}_i$. When $\Delta \mathbf{X}$ is orthonormal, it follows that

$$
\Delta\mathbf{X}^{-1} = \Delta\mathbf{X}^\top.
$$

\noindent which implies

$$
\mathbf{A}^{-1} = (2\Delta\mathbf{X}\Delta\mathbf{X}^\top)^{-1} = \frac{1}{2}\mathbf{I}_{n \times n} 
$$

\noindent and

$$
s_i^{-1} = \frac{1}{2}.
$$

\noindent Consequently, when $\Delta \mathbf{X}$ is orthonormal, the entirety of Term 1 simplifies to

$$
\begin{gathered}
\frac{1}{2}\mathbf{I}_{n \times n}(\mathbf{I}_{n \times n} - \frac{1}{2} \Delta \mathbf{x}_i \Delta \mathbf{x}_i^\top \frac{1}{2}\mathbf{I}_{n \times n}) = \\
\frac{1}{2}\mathbf{I}_{n \times n} - \frac{1}{8} \Delta \mathbf{x}_i \Delta \mathbf{x}_i^\top.
\end{gathered}
$$

\noindent As expected, Term 1 is a matrix that maps vectors to the subspace of $\mathbb{R}^n$ that is orthogonal to $\Delta \mathbf{x}_i$ (shown in Propositions \ref{prop:orthogonal_m}, \ref{prop:null_space_span_m}, and  \ref{prop:rank_m}).

Next, let us revisit Term 2, as discussed in \S\ref{sec:geometric_interpretation_of_solution}. When $\Delta \mathbf{X}$ is orthonormal, we can simplify this term as follows:

$$
s_i^{-1}\mathbf{A}^{-1}\Delta \mathbf{x}_i \Delta\mathbf{f}_i^\top = \frac{1}{4} \Delta \mathbf{x}_i\Delta \mathbf{f}_i^\top.
$$

\noindent This simplification for Term 2 shows that 

$$\mathbf{A}^{-1}\Delta \mathbf{x}_i = \frac{1}{2}\Delta \mathbf{x}_i$$

\noindent for all $i$, meaning $\Delta \mathbf{x}_i$ is always an Eigenvector of $\mathbf{A}^{-1}$ with an associated Eigenvalue of $\frac{1}{2}$.  Thus, Term 2 is indeed orthogonal to Term 1 as the only component being reintroduced is in the direction of $\Delta \mathbf{x}_i$, thereby addressing the potential misalignment between the two terms as noted in \S\ref{sec:geometric_interpretation_of_term_2}.    
\end{proof}

\vspace{10pt}

Given this analysis, the default structure for the $\Delta \mathbf{X}$ matrix in our algorithm is square and orthonormal, as seen in Algorithm \ref{alg:tangent_bundle_matrix}.  In the evaluation in \S\ref{sec:evaluation}, we empirically demonstrate the advantage of using an orthonormal matrix structure for $\Delta \mathbf{X}$, further bolstering this analysis.  





\subsection{Evaluation 1: Supplemental Conditions}
\label{sec:evaluation1_supplement}

As discussed in \S\ref{sec:conditions}, the main version of Evaluation 1 requires that the conditions remain entirely compatible with the benchmark function implemented in Tensorly, without any modifications, optimizations, or condition-specific code adjustments. While this ensures a fair and uniform comparison, it also imposes significant constraints on approaches that rely on custom code or specialized configurations to achieve optimal performance.  In this section, we aim to lift these constraints, enabling conditions to leverage any necessary setup to maximize performance.

We assess nine conditions in this section:

\begin{enumerate}
    \item Forward-mode automatic differentiation with JAX \cite{jax2018github} backend, JIT-compiled on CPU (abbreviated as \textit{FAD-JAX})
    \item Reverse-mode automatic differentiation with JAX \cite{jax2018github} backend, JIT-compiled on CPU (abbreviated as \textit{RAD-JAX})
    \item Forward-mode automatic differentiation with JAX \cite{jax2018github} backend, JIT-compiled on GPU (abbreviated as \textit{FAD-JAX-GPU})
    \item Reverse-mode automatic differentiation with JAX \cite{jax2018github} backend, JIT-compiled on GPU (abbreviated as \textit{RAD-JAX-GPU})
    \item Forward-mode automatic differentiation with Rust ad-trait \cite{adtrait} backend (abbreviated as \textit{FAD-Rust})
    \item Reverse-mode automatic differentiation with Rust ad-trait \cite{adtrait} backend (abbreviated as \textit{RAD-Rust})
    \item Finite Differencing with Rust ad-trait \cite{adtrait} backend (abbreviated as \textit{FD-Rust})
    \item Web of Affine Spaces Optimization with orthonormal $\Delta \mathbf{X}$ matrix and Rust ad-trait \cite{adtrait} backend (abbreviated as \textit{WASP-O-Rust}).
    \item Web of Affine Spaces Optimization with random, non-orthonormal $\Delta \mathbf{X}$ matrix and and Rust ad-trait \cite{adtrait} backend (abbreviated as \textit{WASP-NO-Rust}).
\end{enumerate}

We follow the same three sub-experiments outlined in \S\ref{sec:subexp1}–\S\ref{sec:subexp3}, using all the same parameters. Additionally, all sub-experiments adhere to the same procedure described in \S\ref{sec:procedure} and employ the metrics defined in \S\ref{sec:metrics}.

All experiments in this section are executed on a Desktop computer with an Intel i9 4.4GHz processor, 32 GB of RAM, and Nvidia RTX 4080 GPU (with CUDA enabled for the JAX GPU conditions).

Results for the supplemental Sub-experiment 1 can be seen in the top row of Figure \ref{fig:se1_supplement}.  The WASP conditions have faster runtime than several other conditions, especially for $n < 500$.  For instance, WASP-O-Rust and WASP-NO-Rust are faster than all JAX conditions over this range, with no practical difference observed between CPU and GPU configurations.  Additionally, WASP-O-Rust and WASP-NO-Rust are faster than FD-Rust and FAD-Rust all the way up to $n=1000$ due to significantly fewer calls to the function.  Also, we see that all conditions, including WASP, are considerably slower than RAD-Rust for nearly all settings of $n$.  Thus, if a high-quality reverse-mode AD is available and feasible to maintain, this will likely lead to favorable efficiency and accuracy outcomes for gradient calculations.  Lastly, the average error results again indicate that the orthonormal structure of $\Delta \mathbf{X}$ is important for stable and more accurate results, as our analyses in \S\ref{sec:tangent_matrix_structure} would suggest.         

Results for the supplemental Sub-experiment 2 can be seen in the middle row of Figure \ref{fig:se1_supplement}.  These results resemble the trends seen in the main version of Sub-experiment 2 in \S\ref{sec:subexp2}.  We see that the WASP conditions are more efficient due to fewer calls to the function, though this efficiency comes at the cost of a small amount of error.  Again, we see that the accuracy is more stable when WASP uses an orthonormal $\Delta \mathbf{X}$.  

Results for the supplemental Sub-experiment 3 can be seen in the bottom row of Figure \ref{fig:se1_supplement}.  Again, these results match the takeaways from \S\ref{sec:subexp3}.  The WASP conditions are highly sensitive to step size.  When the step size becomes too large, the error detection and correction mechanism will almost always trigger another iteration meaning that WASP will essentially revert to a finite differencing strategy.

\end{document}